\documentclass{article}

\PassOptionsToPackage{numbers,compress}{natbib}
 \usepackage[preprint]{neurips_2026}

\usepackage[utf8]{inputenc} 
\usepackage[T1]{fontenc}    
\usepackage{hyperref}       
\usepackage{url}            
\usepackage{booktabs}       
\usepackage{amsfonts}       
\usepackage{nicefrac}       
\usepackage{microtype}      
\usepackage{xcolor}         
\usepackage{amsmath,amssymb,amsthm}
\usepackage{multirow}
\usepackage{makecell}
\usepackage{algorithm}
\usepackage{algpseudocode}
\usepackage{graphicx}
\usepackage{soul}

\usepackage{xspace}
\newif\ifshowcomments
\showcommentstrue

\makeatletter
\let\todo\@undefined
\makeatother

\ifshowcomments
    \usepackage[textsize=scriptsize,textwidth=2.4cm]{todonotes}
\else
    \usepackage[disable,textsize=scriptsize,textwidth=2.4cm]{todonotes}
\fi

\title{ORCE: Order-Aware Alignment of Verbalized Confidence in Large Language Models}

\author{
Chen Li$^{1}$ \quad
Xiaoling Hu$^{2}$ \quad
Songzhu Zheng$^{3}$ \quad
Jiawei Zhou$^{1}$ \quad
Chao Chen$^{1}$ \\[3pt]
$^{1}$Stony Brook University, NY, USA \\
$^{2}$Massachusetts General Hospital and Harvard Medical School, MA, USA \\
$^{3}$Morgan Stanley, NY, USA \\
}

\begin{document}

\maketitle
\renewcommand{\thefootnote}{\fnsymbol{footnote}}
\footnotetext{Email: Chen Li (li.chen.8@stonybrook.edu).}
\renewcommand{\thefootnote}{\arabic{footnote}}
\begin{abstract}
Large language models (LLMs) often produce answers with high certainty even when they are incorrect, making reliable confidence estimation essential for deployment in real-world scenarios. Verbalized confidence, where models explicitly state their confidence in natural language, provides a flexible and user-facing uncertainty signal that can be applied even when token logits are unavailable. However, existing verbalized-confidence methods often optimize answer generation and confidence generation jointly, which can cause confidence-alignment objectives to interfere with answer accuracy. In this work, we propose a decoupled and order-aware framework for verbalized confidence calibration. Our method first generates an answer and then estimates confidence conditioned on the fixed question--answer pair, allowing confidence optimization without directly perturbing the answer-generation process. To align confidence with correctness likelihood, we construct a sampling-based surrogate from multiple model completions and optimize rank-based reinforcement learning objectives that encourage responses with higher estimated correctness likelihood to receive higher verbalized confidence. Experiments on reasoning and knowledge-intensive benchmarks show that our method improves calibration and failure prediction performance while largely preserving answer accuracy. These results demonstrate that verbalized confidence can be more reliably aligned by decoupling confidence estimation from answer generation and optimizing the relative ordering of confidence across responses.
\end{abstract}

\section{Introduction}
Large language models (LLMs) are increasingly deployed in high-stakes domains where the quality and reliability of generated answers are critical, such as law~\citep{li2025legalagentbench} and healthcare~\citep{li2024mediq}. In these settings, correctness alone is not enough: models must also know \textit{when and how to express uncertainty}. Yet current LLMs are not explicitly optimized to produce reliable confidence estimates, and their verbalized confidence often remains high even when their responses are incorrect~\citep{ji2023survey,sahoo2024comprehensive,huang2025survey,xiong2024can}. Such overconfidence undermines trust building and remains a key obstacle for the deployment of LLMs in safety-critical applications.
A natural approach to confidence estimation is to use the model's internal probabilities or logits assigned to generated answers~\citep{azaria2023internal,kadavath2022language,kapoor2024large}. Because LLMs generate text autoregressively, such probabilities are defined over tokens conditioned on previous tokens, making logit-based confidence a form of \emph{tokenized confidence}. However, tokenized confidence measures the likelihood of a specific token sequence rather than the semantic correctness of the response. This is poorly suited to free-form generation, where semantically equivalent answers may have different wordings and therefore different token likelihoods.
 They are also unavailable for many black-box LLMs. Verbalized confidence, in which the model explicitly states its confidence in natural language, provides a flexible and user-facing alternative that applies across model types and task formats. Yet existing studies show that verbalized confidence is frequently miscalibrated, overconfident, and highly sensitive to prompting. Prompt-based strategies~\citep{tian2023just,xiong2024can} can partially reduce overconfidence, but their improvements are limited, motivating explicit optimization of stated confidence. Separately, ensemble-style methods estimate uncertainty from group-level signals such as consistency across multiple sampled responses~\citep{xiong2024can}; however, they require repeated generation at inference time and can introduce additional variance from stochastic sampling. These limitations motivate a method that preserves the user-facing benefits of verbalized confidence while aligning it more directly with response correctness. 

Recent methods have attempted to improve verbalized confidence through supervised fine-tuning (SFT)~\cite{liconftuner}, or reinforcement learning (RL)~\cite{xu2024sayself, stengel2024lacie}. However, existing approaches jointly optimize answer generation and confidence generation. This coupling creates a crucial conflict:
objectives designed to improve uncertainty estimation may alter the model's answer distribution and degrade task accuracy (Appendix F.6 of~\cite{liconftuner}). In other words, improving confidence calibration should not come at the cost of generating incorrect answers. 
This issue is particularly problematic for user-facing LLM systems, where confidence estimates are only useful if they accompany strong answers.

In this work, we propose a simple but effective decoupled framework for verbalized confidence estimation 
Instead of training the model to generate answers and confidence jointly, we decouple the process into two stages: the model first produces an answer, and then generates a confidence estimate conditioned on the fixed question--answer pair. This design treats the generated answer as an input for confidence estimation, allowing us to optimize confidence behavior without directly perturbing the answer-generation process. As a result, our method improves the reliability of verbalized confidence while better preserving the model's original task accuracy.

The method is built on two key ideas. First, we estimate the uncertainty of the answer model through a \emph{sampling-based correctness likelihood}. For each prompt, we sample multiple completions from the answer model and use their empirical correctness frequency as a surrogate for how likely the model is to answer that prompt correctly. This provides a model-agnostic signal of answer reliability without requiring access to internal logits or token probabilities.
Second, we use this surrogate to train verbalized confidence with an \emph{order-aware} objective. Rather than only matching absolute confidence values, we optimize rank-based objectives that encourage the confidence model to preserve the relative ordering induced by the surrogate: responses associated with higher estimated correctness likelihood should receive higher verbalized confidence. This formulation is well suited to confidence estimation, where relative reliability is often more robust and practically more meaningful than exact numerical calibration.We evaluate our method on a diverse set of knowledge-intensive reasoning, discrete reasoning, and logical reasoning tasks. Across these settings, our method yields promising improvements on both calibration and failure-prediction metrics.


Our contributions are summarized as follows.
\begin{enumerate}
\item We identify the coupling between answer generation and confidence generation as an important limitation of existing verbalized-confidence alignment methods, and propose a two-stage framework that decouples the two. 
\item We introduce a sampling-based surrogate of correctness likelihood for supervising verbalized confidence without relying on internal model probabilities.
\item We propose an order-aware reinforcement learning objective that aligns verbalized confidence with this surrogate through rank-based optimization.
\item We show that our method improves confidence calibration and failure prediction performance across reasoning and knowledge-intensive benchmarks while largely preserving answer accuracy. 

\end{enumerate}
Together, these results suggest that verbalized confidence can be made more reliable by optimizing not only what confidence value the model states, but also how confidence is ordered across responses of different reliability.

\section{Related Work}
\paragraph{Confidence estimation in classical deep learning.}
Confidence estimation is widely studied~\cite{gal2016dropout, lakshminarayanan2017simple, guo2017calibration, nguyen2015deep, naeini2015obtaining, moon2020confidence, geifmanbias, mandelbaum2017distance, jungo2019assessing,zadrozny2002transforming} in classical deep learning, such as image classification and segmentation tasks. Surrogate based methods~\cite{moon2020confidence, liconfidence} use the correctness frequency or consistency as a surrogate for confidence estimation. Dropout~\cite{gal2016dropout} approximates predictive uncertainty by performing multiple stochastic forward passes with dropout enabled and measuring the mean prediction and variability across the resulting outputs. Deep ensembles~\cite{lakshminarayanan2017simple} estimate confidence by training multiple independently initialized neural networks and aggregating their predictions to obtain both the final prediction and its uncertainty. Because neural networks produce logit or probability outputs, classical confidence estimation methods can conveniently derive confidence scores directly from the model’s predictive distribution.

\paragraph{Confidence estimation in LLMs.} Prior work estimates LLM confidence from answer logits or probabilities~\citep{azaria2023internal,kadavath2022language,kapoor2024large}. While these internal scores are useful, they are tied to the likelihood of specific token sequences and are difficult to define for free-form responses with many semantically equivalent surface forms. Verbalized confidence offers a more flexible and user-facing alternative: it can be elicited without logit access, applies to black-box models, and more directly communicates uncertainty over the semantic correctness of generated answers.

Another line of work estimates confidence through self-consistency or ensembles by sampling multiple completions and aggregating agreement across responses~\citep{xiong2024can,wangself}. These methods are model-agnostic and capture uncertainty over the answer distribution, but require multiple generations at inference time and do not directly produce a single user-facing confidence statement.


Recent studies investigate \emph{verbalized confidence}, where LLMs explicitly state their confidence in natural language~\citep{linteaching,tian2023just,xiong2024can,stengel2024lacie}. Prompting-based studies, which ask language models to explicitly report how confident they are in their answers, show that verbalized confidence can be informative, but is often overconfident, prompt-sensitive, and imperfectly aligned with model uncertainty~\citep{xiong2024can,yona2024can}. Beyond prompting, LACIE~\citep{stengel2024lacie} calibrates confidence through listener-aware preference optimization, SaySelf~\citep{xu2024sayself} trains models to generate confidence with self-reflective rationales, ConfTuner~\citep{liconftuner} optimizes a tokenized Brier-score objective, and CONQORD~\citep{tao2024trust} aligns verbalized confidence with response quality using reinforcement learning. In contrast to these methods, which typically optimize answer and confidence generation jointly, our approach decouples the two stages: the model first generates an answer and then estimates confidence conditioned on the fixed response. This design enables order-aware alignment between verbalized confidence and estimated correctness likelihood while reducing interference with answer-generation accuracy.

\newtheorem{proposition}{Proposition}

\newtheorem{definition}{Definition}
\newtheorem{theorem}{Theorem}
\newtheorem{corollary}{Corollary}
\newtheorem{assumption}{Assumption}

\begin{figure}[t]
    \centering
    \includegraphics[width=1.0\linewidth]{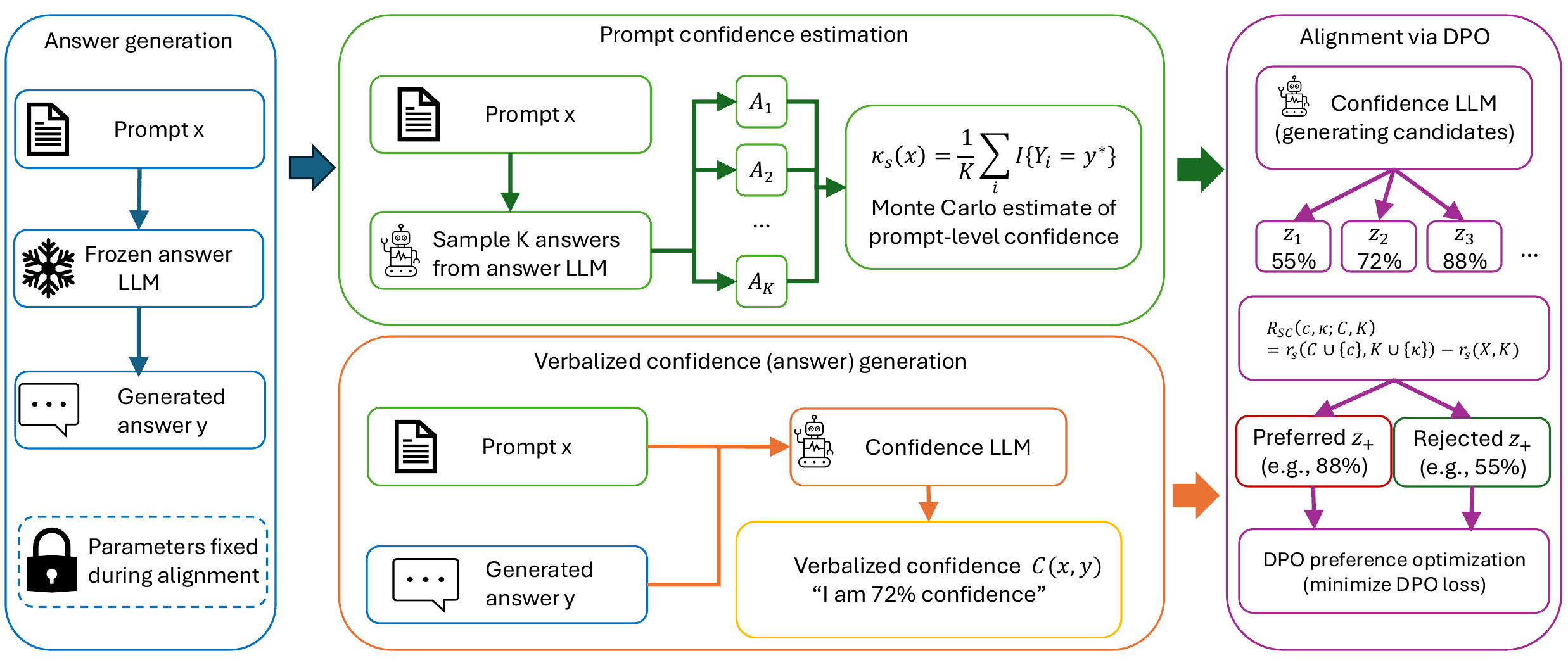}
    \caption{An overview of our method: ORCE separates answer generation from confidence alignment, uses an ensemble-based surrogate $\kappa_s(x)$, and optimizes the confidence LLM with a Spearman-correlation reward.}
    \label{fig:my_image}
\end{figure}

\section{Method}
We introduce ORCE, an order-aware reinforcement learning method for verbalized confidence estimation in language models. Our method decouples the answer and confidence generation into two stages. 
The overview of our method is shown in Figure~\ref{fig:my_image}.

\noindent\textbf{Decoupling answer and confidence generation.} 
In prior studies~\citep{xiong2024can, tao2024trust}, verbalized confidence is generated jointly with the answer using the same model: given a prompt, the LLM outputs both the answer and its associated confidence statement. In Miao et al.~\citep{xiong2024can}, this generation strategy is applied only at inference, making it simple and computationally efficient. However, when reinforcement learning (RL) fine-tuning is introduced~\citep{tao2024trust}, joint generation conflates two distinct objectives. When the policy seeks to improve outputs, the model cannot distinguish whether the improvement signal pertains to the answer or to the verbalized confidence, so RL updates intended to recalibrate confidence can inadvertently degrade answer accuracy. This issue is exacerbated by the limited size of confidence-alignment datasets relative to the pretraining corpus.

To address this, we decouple generation and confidence calibration into two separate stages. In the first stage, an answer LLM produces only the answer, so its parameters are never touched by confidence-alignment updates and answer accuracy is preserved by construction. 
In the second stage, a \textit{separate confidence LLM} generates a verbalized confidence conditioned on both the prompt and the previously generated answer. The confidence LLM is optimized via reinforcement learning. 

\noindent\textbf{From prompt-level confidence to answer-level confidence.} Once the Answer LLM is optimized, the Confidence LLM estimates the verbalized confidence $C(x,y)$ for each prompt-answer pair $(x,y)$. A primary challenge in training a reliable estimator is the absence of ground-truth labels; the model's internal certainty for a specific pair is not directly observable. However, we can approximate this at the prompt level by generating multiple answers for a single prompt and measuring their correctness against a ground-truth answer; such surrogate has been used in confidence estimation literature~\cite{moon2020confidence}.

We leverage these prompt-level confidences as coarse supervision for the Confidence LLM, which uses RL to align its verbalized confidence (at answer level). During training, the Confidence LLM generates confidence scores for a curated list of prompt-answer pairs. This list and the prompt-level ground truth confidences are both sorted and compared using the Spearman correlation (measuring their ranking similarity). To optimize this alignment, we employ DPO. Specifically, we iteratively generate new candidate answers and accept or reject the resulting pairs based on their ability to improve the ranking similarity, ultimately yielding a robust prompt-answer confidence estimator.

\subsection{Prompt Level Confidence Estimation}

For each prompt $x$, let $y^\star$ denote the reference answer, and let
$
g(y,y^\star)\in\{0,1\}
$
be a task-specific correctness function indicating whether an answer $y$ should be regarded as correct. For multiple-choice tasks, $g(y,y^\star)=\mathbb{I}\{y=y^\star\}$ reduces to exact match. For free-form tasks, $g$ is instantiated using the task's standard evaluation criterion, such as normalized exact match, answer extraction, or other benchmark-specific equivalence rules.

Using this correctness function, we can define the prompt-level reliability of the answer LLM as, 
$\eta(x):=\mathbb{P}\big(g(\hat Y,y^\star)=1 \mid x\big),$
where $\hat Y$ is the answer produced by the answer LLM. $\eta(x)$ measures the marginal probability that the answer model produces a correct answer on prompt $x$.

Since $\eta(x)$ is not directly accessible, we introduce an ensemble-based surrogate. For each prompt, we sample $K$ candidate answers $\{Y_i\}_{i=1}^{K}$ from the answer LLM at non-zero temperature and define, 
\begin{equation*}
    \kappa_s(x)=\frac{1}{K}\sum_{i=1}^{K} g(Y_i,y^\star).
\end{equation*}
By construction, $\kappa_s(x)$ is an unbiased Monte Carlo estimator of $\eta(x)$:
$
\mathbb{E}[\kappa_s(x)\mid x]=\eta(x),$
with variance $\eta(x)(1-\eta(x))/K$. Therefore, $\kappa_s(x)$ provides a consistent finite-sample estimate of the prompt-level reliability of the answer model.

\textbf{Selection of prompt-answer pairs.}
We generate prompt-answer pairs for the confidence LLM to estimate confidences, and to align them via DPO. The choice of these pairs, however, cannot be arbitrary.
After estimating $\kappa_s(x)$ from the $K$ sampled answers, we construct the realized answer according to the empirical majority-correctness regime. Concretely, if $\kappa_s(x)\ge 1/2$, we pair the prompt with a correct sampled answer; if $\kappa_s(x)<1/2$, we pair it with an incorrect sampled answer. In this way, the realized answer presented to the confidence model reflects the empirical correctness tendency of the answer model on that prompt, allowing the answer-conditioned score $C(x,y)$ to be trained against a prompt-level surrogate target.
\subsection{Order-aware reward via Spearman correlation}
\label{sec:sc_reward}

\textbf{Order-aware principle.}
We adopt an order-based criterion for verbalized confidence at the level of the constructed training distribution. For each prompt $x$, we first estimate its prompt-level reliability using the ensemble surrogate $\kappa_s(x)$, and then construct the realized answer $y$ so that its correctness state matches the majority-correctness regime implied by $\kappa_s(x)$. As a result, prompts with large surrogate reliability are paired with correct realized answers, whereas prompts with small surrogate reliability are paired with incorrect realized answers. Under this construction, we train the confidence model so that, across training instances,     $C(x_i,y_i) < C(x_j,y_j)
    \;\Longleftrightarrow\;
    \kappa_s(x_i) < \kappa_s(x_j)$,
up to finite-sample noise in the surrogate. Thus, although the confidence model is conditioned on realized answers, the ordering signal it is trained to preserve is induced by the prompt-level reliability of the answer model.


Given $n$ pairs of verbalized and surrogate confidences $\{(c_i,\kappa_i)\}_{i=1}^{n}$, let
$
    C=\{c_i\}_{i=1}^{n},  K=\{\kappa_i\}_{i=1}^{n},
$
and let $\mathrm{rank}(c_i)$ and $\mathrm{rank}(\kappa_i)$ denote the ranks of $c_i$ and $\kappa_i$ within $C$ and $K$, respectively.
A natural question is how to design a reward that encourages rank preservation. One simple local alternative is to compare the rank of each generated confidence value with the rank of its surrogate score, $R_{\mathrm{NRD}}(c_i,\kappa_i)
=
-\left|\mathrm{rank}(c_i)-\mathrm{rank}(\kappa_i)\right|.$
We refer to this baseline as Numeric Ranking Difference (NRD). Although intuitive, NRD is local: it scores each candidate only by its own rank discrepancy and does not directly evaluate how the candidate affects the overall monotone association between verbalized confidence and surrogate reliability. As a result, two candidates with similar individual rank discrepancies can have different effects on the global confidence--surrogate ordering.

To directly optimize the desired order-aware property, we instead use a global rank-correlation reward. Rather than matching each rank independently, a global reward scores a candidate by how much it improves or degrades the overall rank agreement between the verbalized confidence set and the surrogate set. Since our goal is to preserve the full reliability ordering induced by $\kappa_s$, we adopt a Spearman-correlation reward as our primary design.

\begin{figure}[t]
    \centering
    \includegraphics[width=1.0\linewidth]{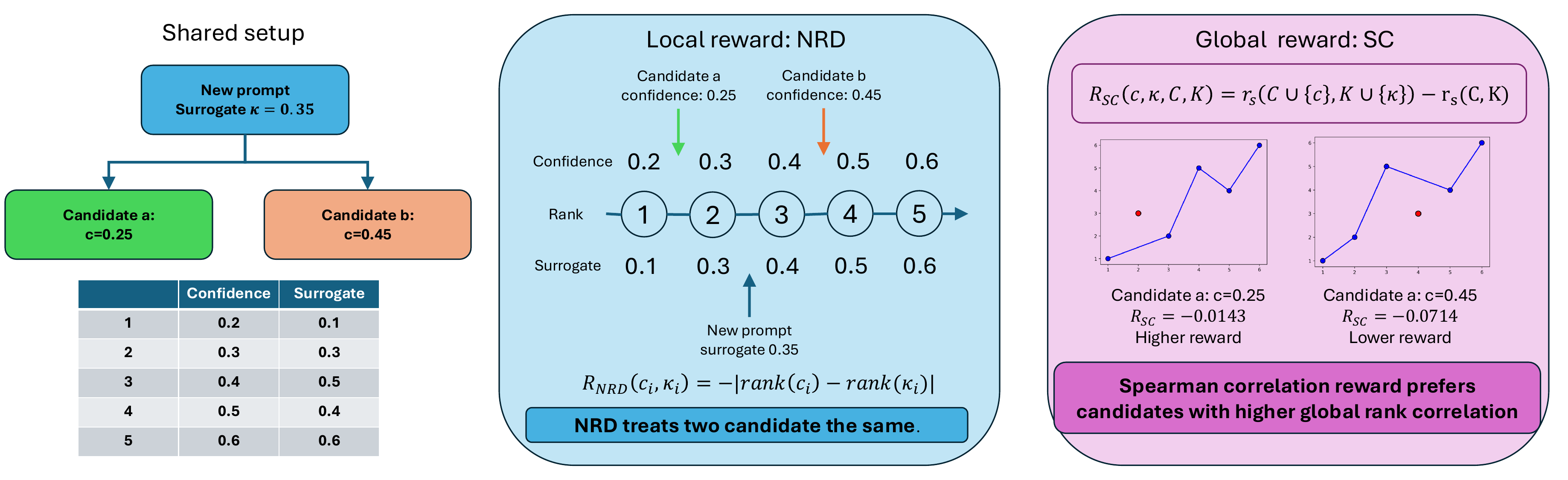}
    \caption{
Comparison between local rank matching (NRD) and global Spearman alignment. Although the two candidate confidences have similar local rank discrepancy under NRD, the SC reward prefers $c^{(a)}=0.25$ because it better preserves the global monotone relationship between verbalized confidence and surrogate reliability.
}
    \label{fig:cases}
\end{figure}

\textbf{Spearman correlation reward.}
Spearman's correlation measures the monotone association between two variables and therefore naturally matches our order-aware objective. The Spearman correlation between $C$ and $K$ is 
\begin{equation*}
    r_s(C,K)
    =
    \frac{
    \mathrm{cov}\!\left[\mathrm{rank}(C),\mathrm{rank}(K)\right]
    }{
    \sigma_{\mathrm{rank}(C)}\,\sigma_{\mathrm{rank}(K)}
    }.
\end{equation*}
For a new candidate pair $(c,\kappa)$, we define the Spearman-correlation (SC) reward as its marginal contribution to the global rank correlation, 
\begin{equation*}
    R_{\mathrm{SC}}(c,\kappa;C,K)
    =
    r_s\!\left(C\cup\{c\},K\cup\{\kappa\}\right)
    -
    r_s(C,K).
\end{equation*}
Thus, a candidate receives positive reward when adding it improves the monotone agreement between verbalized confidence and surrogate reliability, and negative reward when it disrupts this agreement. A comparison between SC and NRD reward is shown in Figure~\ref{fig:cases}.


\textbf{Warm start via SFT.}
A limitation of any correlation-based reward is that it can reinforce an incorrect monotone trend if the initial relationship between $C$ and $K$ is negatively correlated or degenerate. To prevent this, we perform a supervised fine-tuning (SFT) stage prior to reinforcement learning, which establishes a positive correlation between verbalized and surrogate confidences. This SFT pretrained model will produce $C$ for the reinforcement learning stage. As we formalize in Section~\ref{sec:theory}, this positivity is exactly the condition that ensures the SC reward points in the right direction during RL.

\subsection{Alignment strategy}
To align the confidence LLM with the surrogate using the SC reward, we adopt Direct Preference Optimization (DPO). DPO is well-matched to our setting: the SC reward naturally produces, for each prompt, a ranking over sampled verbalized confidences, from which preferred/rejected pairs can be extracted without on-policy rollouts or a separate value model.

The full alignment procedure is as follows. For each prompt in the training set, we (i) generate an answer using the answer LLM; (ii) sample multiple verbalized confidences from the confidence LLM for the resulting $(x, y)$ pair; (iii) score each sampled confidence $c$ using $R_{\mathrm{SC}}(c, \kappa_s(x); C, K)$, where $(C,K)$ is the reference set of confidence--surrogate pairs maintained during training; and (iv) designate the highest- and lowest-scoring samples as the preferred ($z^+$) and rejected ($z^-$) responses, respectively. Given the resulting preference data $(x, z^+, z^-)$, DPO encourages the model to assign higher likelihood to $z^+$ than to $z^-$ while remaining close to a reference model $\pi_{\mathrm{ref}}$ (the SFT confidence LLM prior to RL):
\begin{equation*}
    L_{\mathrm{DPO}} = -\mathbb{E}\!\left[\log \mathrm{sig}\!\left(\beta\!\left[\log \pi_\theta(z^+\!\mid x) - \log \pi_\theta(z^-\!\mid x) - \log \pi_{\mathrm{ref}}(z^+\!\mid x) + \log \pi_{\mathrm{ref}}(z^-\!\mid x)\right]\right)\right],
\end{equation*}
where $\pi_\theta$ is the confidence LLM being optimized, $\mathrm{sig}(\cdot)$ is the sigmoid function, and $\beta$ controls the strength of preference alignment.

\subsection{Theoretical analysis of the SC reward}
\label{sec:theory}

We give a brief interpretation of the SC reward from three perspectives: its ideal population optimum, its surrogate approximation, and the role of the SFT warm-start.

Let $X$ be a random prompt, let $Y^\star$ be its reference answer, and let $\hat Y \sim \pi_A(\cdot\mid X)$ be the answer produced by the answer LLM. Define the correctness indicator $Z:=g(\hat Y,Y^\star)\in\{0,1\}$ and the prompt-level reliability, $\eta(X):=\mathbb P(Z=1\mid X).$
The confidence model takes the realized prompt--answer pair $(X,\hat Y)$ as input and outputs a scalar score $S(X,\hat Y):=f_\theta(X,\hat Y)\in\mathbb R$.

\begin{theorem}[Spearman optimum]
\label{thm:spearman-characterization}
Assume $\eta(X)$ has a continuous distribution on $(0,1)$ and $S(X,\hat Y)$ has a continuous distribution on $\mathbb R$. Then
$\rho_S\!\big(S(X,\hat Y),\eta(X)\big)\le 1$,
with equality if and only if there exists a strictly increasing function $T:(0,1)\to\mathbb R$ such that
$S(X,\hat Y)=T\!\big(\eta(X)\big)$ almost surely.
\end{theorem}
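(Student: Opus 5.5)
We work with the population Spearman correlation, i.e.\ the Pearson correlation of the probability-integral transforms. Let $F_S$ and $F_\eta$ be the (continuous) distribution functions of $S:=S(X,\hat Y)$ and $\eta:=\eta(X)$, and set $U:=F_S(S)$ and $V:=F_\eta(\eta)$. Continuity of both distributions gives $U,V\sim\mathrm{Unif}(0,1)$, so both have variance $1/12>0$. Hence
\[
\rho_S(S,\eta)=\mathrm{corr}(U,V)=12\,\mathrm{cov}(U,V).
\]
The upper bound $\rho_S\le 1$ then follows directly from Cauchy--Schwarz applied to the centered variables $U-\tfrac12$ and $V-\tfrac12$.

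\textbf{Equality case.} By the Cauchy--Schwarz equality condition, $\rho_S=1$ holds iff $U-\tfrac12=a(V-\tfrac12)$ a.s.\ for some $a>0$. Equal variances force $a=1$, so equality is equivalent to $F_S(S)=F_\eta(\eta)$ almost surely.

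\textbf{Sufficiency.} Suppose $S=T(\eta)$ a.s.\ with $T$ strictly increasing. Then $\{S\le T(t)\}=\{\eta\le t\}$ up to null sets, so $F_S(T(t))=F_\eta(t)$. It follows that $F_S(S)=F_S(T(\eta))=F_\eta(\eta)$ a.s., and hence $\rho_S=1$.

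\textbf{Necessity.} This is the main obstacle: building a genuinely strictly increasing $T$ on all of $(0,1)$ from the identity $F_S(S)=F_\eta(\eta)$.

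The natural first candidate is $T_0:=F_S^{-1}\circ F_\eta$, with $F_S^{-1}(u)=\inf\{s:F_S(s)\ge u\}$ the generalized inverse. Since $F_S$ is continuous, $F_S^{-1}(F_S(s))=s$ for $P_S$-a.e.\ $s$; the only exceptions are points inside flat stretches of $F_S$, which carry no mass. Applying $F_S^{-1}$ to both sides of $F_S(S)=F_\eta(\eta)$ therefore gives $S=T_0(\eta)$ a.s.

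However, $T_0$ is only nondecreasing: it is constant wherever $F_\eta$ is flat. Since $\eta$ puts zero mass on those flat intervals, I would modify $T_0$ there. Concretely, set
\[
T(t):=T_0(t)+\epsilon(t),
\]
where $\epsilon$ is a bounded strictly increasing function that vanishes on the support of $\eta$. This works because outside the support $\eta$ lives on a countable union of open intervals, on each of which $\epsilon$ can be chosen strictly increasing with small range, while on the support $T_0$ is already strictly increasing. The range of $\epsilon$ on each gap must be small enough that monotonicity is preserved across the gap.

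A cleaner alternative is to interpret ``strictly increasing'' as strict on the support of $\eta$, which is what the theorem needs almost surely. I would note this interpretation and use the patching construction only if full strictness on $(0,1)$ is insisted upon.

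Finally, I would verify that $T_0$ is strictly increasing on the support of $\eta$. If $t<t'$ are both in the support and $F_\eta(t)<F_\eta(t')$, this follows because $F_S^{-1}$ is strictly increasing on $(0,1)$; the latter holds since $S$ has no atoms, so $F_S$ is continuous and every level in $(0,1)$ is attained. The remaining case $F_\eta(t)=F_\eta(t')$ makes $(t,t')$ a null interval, which contradicts both endpoints lying in the support only when they are the two endpoints of a gap. That boundary case is precisely what the $\epsilon$-patching handles.
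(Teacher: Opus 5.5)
The upper bound, the reduction of the equality case to $F_S(S)=F_\eta(\eta)$ a.s., the sufficiency direction, and the representation $S=T_0(\eta)$ a.s.\ with $T_0=F_S^{-1}\circ F_\eta$ are all correct. The paper itself never proves this theorem (the appendix only treats the surrogate and warm-start propositions), so there is no route to compare against.

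\textbf{The gap.} The problem is the last step, upgrading the nondecreasing $T_0$ to a strictly increasing $T$ on all of $(0,1)$. As written, the patch is incoherent: a strictly increasing $\epsilon$ can vanish at no more than one point. Under the charitable reading ($\epsilon\ge 0$, zero on the support, strictly increasing inside each gap), $T$ drops back down at the right end of every gap.

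More importantly, no patch can work, because the ``only if'' direction is false as stated. Let $\eta$ be uniform on $(0,\tfrac13)\cup(\tfrac23,1)$ and set $S:=F_\eta(\eta)\sim\mathrm{Unif}(0,1)$. Then $F_S(S)=F_\eta(\eta)$, so $\rho_S(S,\eta)=1$. Suppose $T$ were strictly increasing with $T(\eta)=S$ a.s. Then $T=F_\eta$ on a set $A$ of full $\eta$-measure, which is dense in both intervals. This forces $T(\tfrac25)\ge\sup_{x\in A,\,x<1/3}F_\eta(x)=\tfrac12$ and $T(\tfrac35)\le\inf_{y\in A,\,y>2/3}F_\eta(y)=\tfrac12$, contradicting $T(\tfrac25)<T(\tfrac35)$.

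This is exactly your ``boundary case''. At a gap $(t,t')$ of the support, the values of $T_0$ on the support approach $T_0(t)=T_0(t')$ from both sides, unless $S$ has a matching gap (a jump of $F_S^{-1}$ at level $F_\eta(t)$). So there is no room to insert a strict increase. For the same reason, your fallback of strictness on the closed support also fails at gap endpoints.

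There is a second obstruction at the ends. If $\inf\mathrm{supp}\,\eta>0$, then $T_0=F_S^{-1}(0)=-\infty$ below it. When $S$ is unbounded below, no real-valued strictly increasing $T$ exists either. For example, take $\eta\sim\mathrm{Unif}(\tfrac14,\tfrac34)$ and $S=\Phi^{-1}(2\eta-\tfrac12)$, with $\Phi$ the standard normal c.d.f.; then $T(\tfrac18)$ would have to lie below arbitrarily negative values.

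\textbf{What your argument does prove.} The correct statement is: $\rho_S=1$ iff $F_S(S)=F_\eta(\eta)$ a.s., iff $S=T(\eta)$ a.s.\ for some strictly increasing $T\colon B\to\mathbb{R}$ defined on a Borel set $B\subseteq(0,1)$ with $\mathbb{P}(\eta\in B)=1$.

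To see this, take $T=T_0|_B$, where $B$ removes the countably many maximal intervals on which $F_\eta$ is constant. Each such interval is $\eta$-null by continuity. On $B$, the map $F_\eta$ is injective with values in $(0,1)$, where $F_S^{-1}$ is real-valued and strictly increasing. The converse follows exactly as in your sufficiency argument.

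Strict monotonicity on all of $(0,1)$ needs an extra hypothesis, e.g.\ that $F_\eta$ is strictly increasing on $(0,1)$. In that case $T_0$ itself is real-valued and strictly increasing, and no patching is needed.
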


Theorem~\ref{thm:spearman-characterization} shows that, at the population level, optimizing Spearman alignment recovers a score that is a monotone transform of the prompt-level reliability. Although the confidence model is conditioned on the realized answer, the population-optimal ordering is induced by $\eta(X)$.

In practice, $\eta(X)$ is unavailable, so we optimize against the ensemble surrogate
$
\kappa_s^{(K)}(X)=\frac{1}{K}\sum_{i=1}^K g(Y_i,Y^\star),
$
where $Y_1,\dots,Y_K$ are i.i.d.\ samples from $\pi_A(\cdot\mid X)$. By the strong law of large numbers, $\kappa_s^{(K)}(X)\to \eta(X)$ almost surely as $K\to\infty$. Under mild continuity assumptions, this implies
\[
\rho_S\!\big(S(X,\hat Y),\kappa_s^{(K)}(X)\big)\to
\rho_S\!\big(S(X,\hat Y),\eta(X)\big),
\]
so SC alignment against the surrogate approaches SC alignment against the ideal prompt-level target.

Finally, the SC reward is defined relative to a reference set $(C,K)$ through
\[
\Delta_{\mathrm{SC}}(c,\kappa;C,K)
=
r_s(C\cup\{c\},K\cup\{\kappa\})-r_s(C,K).
\]
Hence it acts as an update signal for improving a \emph{global} rank objective around the current anchor set. Because the SFT-initialized reference set is generally imperfect, warm-start is important: when the initial alignment is positive and the misordering is moderate, SC tends to favor candidates that better preserve the surrogate-induced ordering; if the anchor is severely misaligned, the same relative mechanism can reinforce the wrong trend. Thus, the role of SFT is not to produce a perfect ranking, but to initialize the model in a sufficiently aligned regime where global rank refinement is reliable. Proofs and additional discussion are deferred to Appendix~\ref{app:proofs}.

\section{Experiment}
In this section, we evaluate the confidence estimation performance of our method on three widely used open-weight foundation models: Llama-3-8B~\citep{grattafiori2024llama}, Qwen3-8B~\citep{yang2025qwen3}, and Mistral-7B-v0.3~\citep{jiang2023mistral}. These models span different model families and training recipes, allowing us to assess whether the proposed decoupled and order-aware confidence alignment framework generalizes across diverse LLM backbones.

\paragraph{Benchmarks.}
We evaluate our method on three representative reasoning settings: MMLU~\citep{hendrycksmeasuring} for knowledge-intensive reasoning, DROP~\citep{dua2019drop} for discrete reasoning, and LogiQA 2.0~\citep{liu2023logiqa} for logical reasoning. For logical reasoning, we train on LogiQA 2.0 and additionally report out-of-domain generalization results on ReClor~\citep{yureclor}. Results on the LogiQA 2.0 test set are provided in Appendix~\ref{app:extra_res}, and further experimental details are given in Appendix~\ref{app:exper_details}.



\paragraph{Baselines.}
We compare against representative prompting- and training-based baselines. \textbf{Vanilla} uses our decoupled answer-conditioned prompting format without any training. \textbf{Self-Consistency}~\citep{xiong2024can} estimates confidence from agreement across multiple sampled responses. \textbf{Top-$k$}~\citep{tian2023just} prompts the model to consider multiple likely answers before reporting confidence. \textbf{CoT}~\cite{wei2022chain,xiong2024can} elicits confidence after chain-of-thought reasoning. \textbf{ConfTuner}~\citep{liconftuner} is adapted to our decoupled setting so that only the confidence generator is trained while the answer remains fixed. \textbf{SFT} is a supervised fine-tuning baseline trained directly on the same sampling-based surrogate used in our method. Together, these baselines cover prompting-based elicitation, sampling-based estimation, and fine-tuning-based confidence calibration.

\paragraph{Evaluation metrics.}
We assess verbalized confidence from two complementary perspectives: calibration and failure prediction. For calibration, we use \textbf{Expected Calibration Error (ECE)}~\cite{naeini2015obtaining}. For ranking-based failure prediction, we use \textbf{Spearman correlation}~\cite{spearman1904proof}, \textbf{Area Under the Risk--Coverage Curve (AURC)}~\cite{geifman2017selective}, and \textbf{Excess AURC (EAURC)}~\cite{geifmanbias}. More details are in the Appendix~\ref{app:exper_details}.


\begin{table*}[t]
\scriptsize
\setlength{\tabcolsep}{7pt}
\centering
\begin{tabular}{llrrrrrr}
\toprule
\multirow{2.5}{*}{\makecell[l]{\textbf{Foundation}\\ \textbf{Models}}} 
& \multirow{2.5}{*}{\textbf{Methods}} 
& \multirow{2.5}{*}{\textbf{ECE} $\downarrow$}  
& \multicolumn{2}{c}{\textbf{Spearman Correlation}} 
& \multirow{2.5}{*}{\textbf{AURC} $\downarrow$} 
& \multirow{2.5}{*}{\textbf{E-AURC} $\downarrow$} 
& \multirow{2.5}{*}{\textbf{Accuracy} $\uparrow$}  \\
\cmidrule(lr){4-5} 
&  &  & correlation $\uparrow$ & $p$\_value $\downarrow$ \\
\midrule
\multirow{7}{*}{LLAMA-3 8B}  
& Vanilla      & 0.170 & 0.034 & $4.682\times10^{-5}$   & 0.332 & 0.265 & \textbf{0.657} \\
& Consistency  & 0.185 & \underline{0.464} & 0.000 & 0.196 & 0.127 & 0.653 \\
& Top-k        & 0.181 & 0.149 & $1.236\times10^{-70}$  & 0.299 & 0.224 & 0.639 \\
& CoT          & 0.232 & 0.204 & $3.058\times10^{-132}$ & 0.313 & 0.231 & 0.624 \\
& ConfTuner    & 0.167 & 0.409 & 0.000 & 0.172 & 0.105 & \textbf{0.657} \\
& SFT          & \underline{0.068} & 0.433 & 0.000 & \underline{0.165} & \underline{0.098} & \textbf{0.657} \\
& \bf Ours-ORCE  & \textbf{0.025} & \textbf{0.477} & 0.000 & \textbf{0.156} & \textbf{0.089} & \textbf{0.657} \\
\midrule
\multirow{7}{*}{Qwen3 8B}  
& Vanilla      & 0.212 & 0.198 & $1.143\times10^{-123}$ & 0.175 & 0.139 & \textbf{0.749} \\
& Consistency  & 0.204 & 0.368 & 0.000 & 0.163 & 0.128 & 0.745 \\
& Top-k        & \underline{0.043} & 0.247 & $8.568\times10^{-195}$ & 0.177 & 0.136 & 0.726 \\
& CoT          & 0.137 & 0.393 & 0.000 & 0.140 & 0.102 & 0.738 \\
& ConfTuner    & 0.088 & 0.400 & 0.000 & \underline{0.121} & \underline{0.085} & \textbf{0.749} \\
& SFT          & 0.130 & 0.390 & 0.000 & 0.149 & 0.113 & \textbf{0.749} \\
& \bf Ours-ORCE  & \textbf{0.034} & \textbf{0.414} & 0.000 & \textbf{0.116} & \textbf{0.080} & \textbf{0.749} \\
\midrule
\multirow{7}{*}{Mistral 7B}  
& Vanilla      & 0.248 & 0.010 & 0.240 & 0.374 & 0.292 & \textbf{0.625} \\
& Consistency  & 0.274 & 0.371 & 0.000 & 0.275 & 0.193 & 0.623 \\
& Top-k        & 0.247 & 0.171 & $8.971 \times 10^{-93}$ & 0.347 & 0.261 & 0.614 \\
& CoT          & 0.284 & 0.126 & $5.432 \times 10^{-51}$ & 0.385 & 0.283 & 0.584 \\
& ConfTuner    & 0.084 & 0.348 & 0.000 & 0.224 & \underline{0.143} & \textbf{0.625} \\
& SFT          & \textbf{0.066} & 0.316 & 0.000 & \underline{0.228} & 0.147 & \textbf{0.625} \\
& \bf Ours-ORCE  & \underline{0.073} & \textbf{0.408} & 0.000 & \textbf{0.206} & \textbf{0.124} & \textbf{0.625} \\
\bottomrule
\end{tabular}
\caption{Alignment performance of methods across the foundation models on the MMLU dataset.}
\label{tab:mmlu}
\end{table*}

\begin{table*}[t]
\scriptsize
\setlength{\tabcolsep}{8pt}
\centering
\begin{tabular}{llrrrrrr}
\toprule
\multirow{2.5}{*}{\makecell[l]{\textbf{Foundation}\\ \textbf{Models}}} 
& \multirow{2.5}{*}{\textbf{Methods}} 
& \multirow{2.5}{*}{\textbf{ECE} $\downarrow$}  
& \multicolumn{2}{c}{\textbf{Spearman Correlation}} 
& \multirow{2.5}{*}{\textbf{AURC} $\downarrow$} 
& \multirow{2.5}{*}{\textbf{E-AURC} $\downarrow$} 
& \multirow{2.5}{*}{\textbf{F1} $\uparrow$} \\
\cmidrule(lr){4-5} 
&  &  & correlation $\uparrow$ & $p$\_value $\downarrow$ & & & \\
\midrule

\multirow{7}{*}{LLAMA-3 8B}  
& Vanilla & 0.327 & -0.024 & 0.021 & 0.459 & 0.319 & \underline{0.568}  \\
& Consistency & \textbf{0.038} & 0.514 & 0.000 & 0.272 & 0.123 & 0.551 \\
& Top-k & 0.373 & -0.120 & $3.848 \times 10^{-32}$ & 0.550 & 0.383 & 0.524 \\
& CoT & 0.221 & 0.065 & $1.588 \times 10^{-10}$ & 0.373 & 0.278 & \textbf{0.647} \\
& ConfTuner & 0.119 & \underline{0.571} & 0.000 & 0.243 & 0.102 & \underline{0.568} \\
& SFT & 0.093 & 0.568 & 0.000 & \underline{0.237} & \underline{0.097} & \underline{0.568} \\
& \bf Ours-ORCE & \underline{0.074} & \textbf{0.628} & 0.000 & \textbf{0.218} & \textbf{0.078} & \underline{0.568} \\

\midrule

\multirow{7}{*}{Qwen3 8B}  
& Vanilla & 0.314 & 0.012 & 0.248 & 0.472 & 0.328 & 0.568 \\
& Consistency & 0.266 & 0.413 & 0.000 & 0.312 & 0.180 & 0.579 \\
& Top-k & 0.310 & 0.176 & $4.371 \times 10^{-67}$ & 0.349 & 0.245 & \underline{0.626} \\
& CoT & 0.142 & 0.278 & $4.132 \times 10^{-169}$ & \textbf{0.199} & 0.161 & \textbf{0.777} \\
& ConfTuner & 0.114 & 0.531 & 0.000 & 0.275 & 0.131 & 0.568 \\
& SFT & \underline{0.049} & \underline{0.625} & 0.000 & 0.226 & \underline{0.082} & 0.568 \\
& \bf Ours-ORCE & \textbf{0.048} & \textbf{0.672} & 0.000 & \underline{0.214} & \textbf{0.070} & 0.568 \\

\midrule

\multirow{7}{*}{Mistral 7B}  
& Vanilla & 0.512 & 0.064 & $2.931 \times 10^{-10}$ & 0.469 & 0.300 & \underline{0.526} \\
& Consistency & 0.132 & \underline{0.523} & 0.000 & 0.312 & \underline{0.129} & 0.506 \\
& Top-k & 0.433 & 0.136 & $7.944 \times 10^{-41}$ & 0.467 & 0.283 & 0.505 \\
& CoT & 0.249 & 0.185 & $3.743\times 10^{-74}$ & 0.343 & 0.241 & \textbf{0.628} \\
& ConfTuner & \textbf{0.021} & 0.081 & $2.143\times 10^{-15}$ & 0.506 & 0.336 & \underline{0.526} \\
& SFT & 0.058 & 0.481 & 0.000 & \underline{0.305} & 0.135 & \underline{0.526} \\
& \bf Ours-ORCE & \underline{0.054} & \textbf{0.578} & 0.000 & \textbf{0.274} & \textbf{0.104} & \underline{0.526} \\

\bottomrule
\end{tabular}
\caption{Alignment performance of methods across the foundation models on the DROP dataset.}
\label{tab:drop}
\end{table*}
\paragraph{Results on MMLU.}
Table~\ref{tab:mmlu} reports the calibration and failure prediction performance on MMLU across three foundation models. Overall, Ours-ORCE consistently achieves the best or near-best calibration performance while preserving the original answer accuracy. In particular, it obtains the lowest ECE on LLAMA-3 8B and Qwen3 8B, reducing ECE from 0.170 to 0.025 and from 0.212 to 0.034 compared with Vanilla, and substantially outperforming ConfTuner on both models. On Mistral 7B, Ours-ORCE achieves the strongest failure prediction performance, with the highest Spearman correlation, lowest AURC/E-AURC, and best AUPR among all methods, while maintaining the same accuracy as Vanilla. Compared with prompting-based baselines such as Top-k and CoT, Ours-ORCE provides much stronger calibration and ranking performance, suggesting that prompting alone is insufficient to reliably align verbalized confidence. 

\paragraph{Results on DROP.}
Table~\ref{tab:drop} reports the results on DROP. Overall, \textbf{Ours-ORCE} achieves the strongest confidence-alignment performance across the three foundation models while preserving the original answer-generation performance. Compared with vanilla models, Ours-ORCE substantially reduces ECE and improves Spearman correlation, AURC, E-AURC, and AUPR, indicating that the learned verbalized confidence is better aligned with correctness likelihood and more useful for failure prediction. For LLAMA-3 8B, Ours-ORCE obtains the best Spearman correlation, AURC, and E-AURC, and also improves ECE over ConfTuner and SFT. For Qwen 7B, Ours-ORCE achieves the best results on nearly all confidence metrics, including the lowest ECE, highest Spearman correlation, lowest AURC/E-AURC, and highest AUPR. For Mistral 7B, Ours-ORCE achieves the best Spearman correlation and failure prediction metrics, although ConfTuner obtains the lowest ECE; this suggests that pointwise calibration alone does not necessarily imply better ranking quality or failure prediction performance. 

\paragraph{Results on ReClor.}
Table~\ref{tab:reclor} reports results on ReClor, where Ours-ORCE consistently improves verbalized confidence alignment across all three foundation models while preserving answer accuracy. Compared with Vanilla, Ours-ORCE substantially reduces ECE from 0.178 to 0.065 on LLAMA-3 8B, from 0.164 to 0.044 on Qwen 7B, and from 0.468 to 0.061 on Mistral 7B. It also achieves the strongest Spearman correlation and the best AURC/E-AURC and AUPR across all models, indicating better ranking of correct versus incorrect responses and stronger failure prediction performance. Compared with ConfTuner and SFT, Ours-ORCE further improves calibration and ranking metrics, showing the benefit of order-aware alignment beyond standard fine-tuning. 

\begin{table*}[t]
\scriptsize
\setlength{\tabcolsep}{7pt}
\centering
\begin{tabular}{llrrrrrr}
\toprule
\multirow{2.5}{*}{\makecell[l]{\textbf{Foundation}\\ \textbf{Models}}} 
& \multirow{2.5}{*}{\textbf{Methods}} 
& \multirow{2.5}{*}{\textbf{ECE} $\downarrow$}  
& \multicolumn{2}{c}{\textbf{Spearman Correlation}} 
& \multirow{2.5}{*}{\textbf{AURC} $\downarrow$} 
& \multirow{2.5}{*}{\textbf{E-AURC} $\downarrow$} 
& \multirow{2.5}{*}{\textbf{Accuracy} $\uparrow$}  \\
\cmidrule(lr){4-5} 
&  &  & correlation $\uparrow$ & $p$\_value $\downarrow$ & & & \\
\midrule

\multirow{7}{*}{LLAMA-3 8B}  
& Vanilla & 0.178 & 0.053 & 0.239 & 0.373 & 0.283 & \underline{0.608} \\
& Consistency & 0.376 & 0.072 & 0.110 & 0.355 & 0.269 & \textbf{0.616} \\
& Top-k & 0.270 & 0.061 & 0.177 & 0.441 & 0.298 & 0.516 \\
& CoT & 0.338 & 0.067 & 0.134 & 0.441 & 0.304 & 0.524 \\
& ConfTuner & 0.140 & 0.343 & $3.196\times 10^{-15}$ & 0.250 & 0.161 & \underline{0.608} \\
& SFT & \underline{0.084} & \underline{0.446} & $8.378 \times 10^{-26}$ & \underline{0.201} & \underline{0.112} & \underline{0.608} \\
& \bf Ours-ORCE & \textbf{0.065} & \textbf{0.544} & $8.061 \times 10^{-40}$ & \textbf{0.169} & \textbf{0.080} & \underline{0.608} \\

\midrule

\multirow{7}{*}{Qwen3 8B}  
& Vanilla & 0.164 & 0.059 & 0.187 & 0.190 & 0.167 & 0.794 \\
& Consistency & 0.167 & 0.311 & $1.206 \times 10^{-12}$ & 0.147 & 0.126 & \textbf{0.802} \\
& Top-k & \underline{0.046} & 0.133 & $2.797 \times 10^{-3}$ & 0.161 & 0.139 & \underline{0.800} \\
& CoT & 0.124 & 0.084 & 0.062 & 0.183 & 0.161 & 0.796 \\
& ConfTuner & 0.047 & 0.351 & $6.047 \times 10^{-16}$ & 0.091 & 0.069 & 0.794 \\
& SFT & 0.062 & \underline{0.537} & $9.431\times 10^{-39}$ & \underline{0.083} & \underline{0.060} & 0.794 \\
& \bf Ours-ORCE & \textbf{0.044} & \textbf{0.626} & $8.051 \times 10^{-56}$ & \textbf{0.064} & \textbf{0.041} & 0.794 \\

\midrule

\multirow{7}{*}{Mistral 7B}  
& Vanilla & 0.468 & 0.049 & 0.275 & 0.477 & 0.332 & 0.512 \\
& Consistency & 0.483 & 0.029 & 0.523 & 0.489 & 0.346 & \underline{0.514} \\
& Top-k & 0.357 & 0.153 & $6.216 \times 10^{-4}$ & 0.438 & 0.292 & 0.510 \\
& CoT & 0.300 & 0.124 & 0.006 & 0.389 & 0.271 & \textbf{0.556} \\
& ConfTuner & 0.141 & 0.191 & $1.712 \times 10^{-5}$ & 0.408 & 0.263 & 0.512 \\
& SFT & \underline{0.073} & \underline{0.344} & $2.368\times 10^{-15}$ & \underline{0.329} & \underline{0.184} & 0.512 \\
& \bf Ours-ORCE & \textbf{0.061} & \textbf{0.392} & $8.905 \times 10^{-20}$ & \textbf{0.306} & \textbf{0.161} & 0.512 \\

\bottomrule
\end{tabular}
\caption{Alignment performance of methods across the foundation models on the ReClor dataset.}
\label{tab:reclor}
\end{table*}

\paragraph{Ablation study.}
We conduct ablation studies on MMLU with LLAMA-3 8B to evaluate the contribution of each component in our method. First, we compare against a SaySelf-inspired reward, which selects high verbalized confidence for correct responses and low verbalized confidence for incorrect responses. As shown in Table~\ref{tab:ablation}, this objective achieves relatively low ECE, but performs worse on ranking-based metrics such as Spearman correlation, AURC, and E-AURC, suggesting that correctness-dependent confidence preference alone is insufficient to learn fine-grained confidence ordering. Second, we replace our rank-based reward with a pointwise numeric-distance objective between verbalized confidence and the surrogate confidence score. This variant also underperforms Ours-ORCE, indicating that directly matching absolute confidence values is less effective than preserving the relative reliability ordering among responses. Third, the W/o gen (SC) variant replaces the generated verbalized confidence list used in reward construction with the surrogate confidence list. This variant performs worse on all calibration and failure-prediction metrics, indicating that effective alignment requires explicitly optimizing the relative ordering of the model’s own generated confidence outputs, rather than relying only on the surrogate ranking. Finally, we compare our Spearman correlation reward with the Numeric Ranking Difference (NRD) reward. Although NRD improves over several ablations, Ours-ORCE achieves the best ECE, Spearman correlation, AURC, and E-AURC, demonstrating that global order-aware alignment is more effective than local rank-difference matching. Notably, all variants preserve the same answer accuracy, further supporting the benefit of decoupling confidence optimization from answer generation.

\begin{table*}[t]
\scriptsize
\setlength{\tabcolsep}{4pt}
\centering
\begin{tabular}{llrrrrrr}
\toprule
\multirow{2.5}{*}{\makecell[l]{\textbf{Foundation}\\ \textbf{Models}}} 
& \multirow{2.5}{*}{\textbf{Methods}} 
& \multirow{2.5}{*}{\textbf{ECE} $\downarrow$}  
& \multicolumn{2}{c}{\textbf{Spearman Correlation}} 
& \multirow{2.5}{*}{\textbf{AURC} $\downarrow$} 
& \multirow{2.5}{*}{\textbf{E-AURC} $\downarrow$} 
& \multirow{2.5}{*}{\textbf{Accuracy} $\uparrow$}  \\
\cmidrule(lr){4-5} 
&  &  & correlation $\uparrow$ & $p$\_value $\downarrow$ & & & \\
\midrule
\multirow{6}{*}{LLAMA-3 8B}  
& Vanilla  & 0.170 & 0.034 & $4.682\times10^{-5}$ & 0.332 & 0.265 & \textbf{0.657} \\
& SaySelf-inspired RL objective & \underline{0.038} & 0.449 & 0.000 & 0.178 & 0.111 & \textbf{0.657} \\
& W/o gen (SC) & 0.143 & 0.468 & 0.000 & 0.160 & 0.093 & \textbf{0.657} \\
& Numeric dist & 0.137 & 0.469 & 0.000 & 0.160 & 0.093 & \textbf{0.657} \\
& NRD & 0.110 & \underline{0.473} & 0.000 & \underline{0.159} & \underline{0.092} & \textbf{0.657} \\
& \bf Ours-ORCE & \textbf{0.025} & \textbf{0.477} & 0.000 & \textbf{0.156} & \textbf{0.089} & \textbf{0.657} \\
\bottomrule
\end{tabular}
\caption{Ablation study results of our method on MMLU dataset with LLAMA-3 8B.}
\label{tab:ablation}
\end{table*}

\begin{table*}[t]
\scriptsize
\setlength{\tabcolsep}{8pt}
\centering
\begin{tabular}{llrrrrrr}
\toprule
\multirow{2.5}{*}{\makecell[l]{\textbf{Foundation}\\ \textbf{Models}}} 
& \multirow{2.5}{*}{\textbf{Methods}} 
& \multirow{2.5}{*}{\textbf{ECE} $\downarrow$}  
& \multicolumn{2}{c}{\textbf{Spearman Correlation}} 
& \multirow{2.5}{*}{\textbf{AURC} $\downarrow$} 
& \multirow{2.5}{*}{\textbf{E-AURC} $\downarrow$} 
& \multirow{2.5}{*}{\textbf{Accuracy} $\uparrow$} \\
\cmidrule(lr){4-5} 
&  &  & correlation $\uparrow$ & $p$\_value $\downarrow$ & & & \\
\midrule

\multirow{4}{*}{Qwen3 8B}  
& Vanilla & 0.212 & 0.198 & $1.143\times10^{-123}$ & 0.175 & 0.139 & \textbf{0.749} \\
& ConfTuner & 0.290 & \underline{0.356} & 0.000 & 0.122 & 0.086 & \textbf{0.749} \\
& SFT & \underline{0.112} & 0.351 & 0.000 & \underline{0.118} & \underline{0.084} & \textbf{0.749} \\
& \bf Ours-ORCE & \textbf{0.049} & \textbf{0.397} & 0.000 & \textbf{0.107} & \textbf{0.073} & \textbf{0.749} \\

\midrule

\multirow{4}{*}{Mistral 7B}  
& Vanilla & 0.248 & 0.010 & 0.240 & 0.374 & 0.292 & \textbf{0.625} \\
& ConfTuner & 0.164 & \underline{0.458} & 0.000 & \underline{0.182} & \underline{0.100} & \textbf{0.625} \\
& SFT & \underline{0.054} & \underline{0.458} & 0.000 & \underline{0.182} & \underline{0.100} & \textbf{0.625} \\
& \bf Ours-ORCE & \textbf{0.025} & \textbf{0.517} & 0.000 & \textbf{0.169} & \textbf{0.087} & \textbf{0.625} \\

\bottomrule
\end{tabular}
\caption{Alignment performance of methods trained on LLAMA-3 8B and transferred to Qwen 7B and Mistral 7B.}
\label{tab:transfer}
\end{table*}

Table~\ref{tab:transfer} evaluates the transferability of confidence-alignment methods trained on LLAMA-3 8B to other foundation models. Ours-ORCE consistently achieves the best performance on both Qwen 7B and Mistral 7B, obtaining the lowest ECE, highest Spearman correlation, and lowest AURC/E-AURC across all transferred settings. Compared with ConfTuner and SFT, Ours-ORCE shows stronger calibration and failure prediction performance, suggesting that the proposed order-aware alignment objective transfers more effectively across model families. Importantly, all methods preserve the same answer accuracy within each target model, indicating that the transferred confidence estimator improves uncertainty alignment without affecting the underlying answer-generation behavior.

\section{Conclusion}
In this work, we propose a decoupled and order-aware framework for verbalized confidence estimation in large language models. By separating answer generation from confidence estimation, our method improves uncertainty estimation while preserving the model's answer-generation behavior. We further introduce a Spearman-based reinforcement learning objective that aligns verbalized confidence with a sampling-based surrogate of correctness likelihood, encouraging more reliable responses to receive higher stated confidence. Experiments across multiple reasoning and knowledge-intensive benchmarks show that our approach consistently improves calibration and failure prediction performance without degrading answer accuracy. These results suggest that reliable verbalized confidence requires not only calibrated confidence values, but also the correct ordering of confidence across responses with different levels of reliability.

\newpage

\bibliography{iclr2026_conference}
\bibliographystyle{iclr2026_conference}
\appendix

\newpage
\section{Additional theoretical details}
\label{app:proofs}

This appendix provides additional justification for the theoretical discussion in Section~\ref{sec:theory}. Our goal is to clarify three points: (i) why the ensemble surrogate $\kappa_s$ is a consistent approximation to the ideal prompt-level target $\eta(X)$; (ii) why the Spearman-correlation (SC) reward should be interpreted as a marginal improvement in a \emph{global} rank objective; and (iii) why a positive SFT warm-start is important when the reward is defined relative to an existing reference ranking. Because the practical training procedure uses a finite-sample surrogate, a drifting reference set, and DPO rather than exact ascent on a population objective, the results below characterize the \emph{idealized objective being approximated} rather than the full dynamics of the practical algorithm.

Throughout, $X \sim \mathcal{D}_X$ denotes a random prompt, $Y^\star$ its ground-truth answer, and $\hat Y \sim \pi_A(\cdot\mid X)$ the answer produced by the answer LLM. The prompt-level target confidence is
\[
  \eta(X) := \mathbb{P}(\hat Y = Y^\star \mid X),
\]
and the confidence model outputs a scalar score
\[
  S(X,\hat Y) := f_\theta(X,\hat Y) \in \mathbb{R}.
\]
We assume that $\eta(X)$ and $S(X,\hat Y)$ have continuous distributions, so that Spearman correlation is well-defined without ties.

\subsection{Consistency of the surrogate objective}
\label{app:proof-consistency}

We begin by justifying the use of the ensemble surrogate
\[
  \kappa_s^{(K)}(X) := \frac{1}{K}\sum_{i=1}^{K}\mathbb{I}\{Y_i = Y^\star\},
\]
where $Y_1,\dots,Y_K$ are i.i.d.\ samples from $\pi_A(\cdot\mid X)$.

\begin{proposition}[Pointwise consistency of the surrogate]
\label{prop:surrogate-pointwise}
For $\mathcal{D}_X$-almost every $x$,
\[
  \kappa_s^{(K)}(x) \xrightarrow{\mathrm{a.s.}} \eta(x)
  \qquad\text{as } K\to\infty.
\]
Consequently,
\[
  \kappa_s^{(K)}(X) \xrightarrow{\mathrm{a.s.}} \eta(X)
\]
as random variables on the joint probability space.
\end{proposition}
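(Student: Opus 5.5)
The plan is to condition on the prompt and apply the strong law of large numbers, then lift the conditional statement to the joint space by integrating over $\mathcal{D}_X$.

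\textbf{Step 1: fix a prompt.} Fix $x$ in the support of $\mathcal{D}_X$, together with its reference answer $y^\star$. Conditionally on $X=x$, the samples $Y_1,Y_2,\dots$ are i.i.d.\ from $\pi_A(\cdot\mid x)$. Hence the indicators $B_i:=\mathbb{I}\{Y_i=y^\star\}$ are i.i.d.\ Bernoulli variables. Their mean is
\[
\mathbb{E}[B_i\mid X=x]=\mathbb{P}(\hat Y=y^\star\mid X=x)=\eta(x),
\]
since each $Y_i$ has the same law as $\hat Y$ given $X=x$.

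\textbf{Step 2: apply the SLLN.} The $B_i$ are bounded, so they are integrable. Kolmogorov's strong law of large numbers then gives
\[
\frac1K\sum_{i=1}^K B_i\to\eta(x)
\]
almost surely under the conditional law $\mathbb{P}(\cdot\mid X=x)$. This is the first claim. In fact it holds for every $x$, not just almost every $x$, once the conditional law is specified. The only caveat is that conditional probabilities are defined up to $\mathcal{D}_X$-null sets, which explains the ``almost every'' qualifier.

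\textbf{Step 3: lift to the joint space.} Let $A:=\{\kappa_s^{(K)}(X)\to\eta(X)\}$. This event is measurable, because it is defined by countably many conditions on measurable functions of $(X,Y_1,Y_2,\dots)$, for instance as a $\limsup$ of countably many measurable events. Disintegrate the joint law as $\mathcal{D}_X(dx)\otimes\pi_A(\cdot\mid x)^{\otimes\mathbb{N}}$. This requires the sequence $(Y_i)_{i\ge1}$ to be conditionally i.i.d.\ given $X$, which is how the sampling is set up. Then
\[
\mathbb{P}(A)=\int \mathbb{P}(A\mid X=x)\,\mathcal{D}_X(dx)=\int 1\,\mathcal{D}_X(dx)=1,
\]
where the middle equality uses Step 2 for $\mathcal{D}_X$-almost every $x$.

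\textbf{Main obstacle.} The probabilistic content is routine. The one real point is the measure-theoretic lift in Step 3: the joint space must be constructed as a product or kernel measure, so that the section of $A$ at each $x$ is measurable and Fubini/Tonelli applies to $\mathbb{I}_A$. I would state this construction explicitly, for example via the Ionescu-Tulcea theorem. The remaining steps follow directly.
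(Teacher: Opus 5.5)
Your proposal is correct and takes the same route as the paper. Both condition on the prompt and apply the strong law of large numbers to the i.i.d.\ Bernoulli indicators with mean $\eta(x)$; your Step 3 then spells out, via the kernel construction $\mathcal{D}_X(dx)\otimes\pi_A(\cdot\mid x)^{\otimes\mathbb{N}}$ and Fubini, the lift to the joint space that the paper asserts in a single sentence.
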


\begin{proof}
For each fixed $x$, the indicators $\mathbb{I}\{Y_i = y^\star\}$ are i.i.d.\ Bernoulli random variables with mean
\[
  \mathbb{P}(Y_i = y^\star \mid x)=\eta(x).
\]
The strong law of large numbers therefore gives
\[
  \kappa_s^{(K)}(x)\xrightarrow{\mathrm{a.s.}}\eta(x).
\]
Since this holds for $\mathcal{D}_X$-almost every $x$, the same conclusion holds for the random variable $X$.
\end{proof}

\begin{proposition}[Population-level consistency of Spearman alignment]
\label{prop:surrogate-spearman}
Assume the marginals of $\eta(X)$ and $S(X,\hat Y)$ are continuous. If the population Spearman functional is continuous at the joint law of $(S(X,\hat Y),\eta(X))$, then
\[
  \rho_S\!\big(S(X,\hat Y), \kappa_s^{(K)}(X)\big)
  \xrightarrow{}
  \rho_S\!\big(S(X,\hat Y), \eta(X)\big)
\]
as $K\to\infty$.
\end{proposition}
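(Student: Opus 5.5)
The argument combines Proposition~\ref{prop:surrogate-pointwise} with the assumed continuity of the Spearman functional. Write $S:=S(X,\hat Y)$. Let $P_K$ be the joint law of $(S,\kappa_s^{(K)}(X))$ and $P_\infty$ the joint law of $(S,\eta(X))$. The plan is to show that $P_K\Rightarrow P_\infty$ weakly. The conclusion then follows, because the hypothesis says that $\rho_S$, viewed as a map from bivariate laws to $[-1,1]$, is continuous at $P_\infty$. We read ``continuous'' with respect to weak convergence, which is the natural topology here.

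\textbf{Step 1 (joint a.s.\ convergence).} All variables live on one probability space carrying $X$, $\hat Y$, and the i.i.d.\ samples $Y_1,Y_2,\dots$. By Proposition~\ref{prop:surrogate-pointwise}, $\kappa_s^{(K)}(X)\to\eta(X)$ almost surely. The first coordinate $S$ does not depend on $K$. Hence the vector $(S,\kappa_s^{(K)}(X))\to(S,\eta(X))$ almost surely in $\mathbb R^2$.

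\textbf{Step 2 (weak convergence).} Almost sure convergence implies convergence in distribution. For any bounded continuous $h$ on $\mathbb R^2$, dominated convergence gives $\mathbb E\,h(S,\kappa_s^{(K)})\to\mathbb E\,h(S,\eta)$. So $P_K\Rightarrow P_\infty$, and continuity of $\rho_S$ at $P_\infty$ yields the claimed limit.

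\textbf{Main obstacle.} The difficulty is that $\kappa_s^{(K)}(X)$ takes values in $\{0,1/K,\dots,1\}$, so $P_K$ has atoms and ties. The population Spearman functional must therefore be fixed by a tie convention. The natural choice is mid-ranks: $\rho_S(U,V)=\mathrm{corr}(\tilde F_U(U),\tilde F_V(V))$ with $\tilde F(t)=\tfrac12(F(t)+F(t^-))$. This is also why the statement assumes continuity rather than proving it.

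If I want to verify the hypothesis instead of assuming it, I would first show that the mid-rank transforms converge. Since $\eta(X)$ has a continuous marginal, the distribution functions $F_{\kappa_s^{(K)}}$ converge to $F_\eta$ uniformly (Pólya's theorem). This gives $\tilde F_{\kappa_s^{(K)}}(\kappa_s^{(K)}(X))\to F_\eta(\eta(X))$ almost surely. The same argument handles the atom-free $S$ side. All rank variables are bounded in $[0,1]$, so bounded convergence passes the limit through the covariance and variance terms, and the limiting variances equal $1/12>0$. This step, controlling the discretization ties, is where care is needed. The rest is routine.
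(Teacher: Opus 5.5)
Your proposal is correct and follows the paper's proof essentially step for step: joint almost-sure convergence of $(S,\kappa_s^{(K)}(X))$ via Proposition~\ref{prop:surrogate-pointwise}, then weak convergence of the joint laws, then the assumed continuity of the Spearman functional at the limit law. Your additional mid-rank/P\'olya argument is also sound and goes beyond the paper's sketch: it proves the limit directly along this sequence, which shows that under the mid-rank tie convention the continuity hypothesis is not actually needed.
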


\begin{proof}[Proof sketch]
By Proposition~\ref{prop:surrogate-pointwise},
\[
  \kappa_s^{(K)}(X)\xrightarrow{\mathrm{a.s.}}\eta(X).
\]
Since $S(X,\hat Y)$ does not depend on $K$, this yields joint almost-sure convergence
\[
  \big(S(X,\hat Y),\kappa_s^{(K)}(X)\big)
  \xrightarrow{\mathrm{a.s.}}
  \big(S(X,\hat Y),\eta(X)\big).
\]
Hence the corresponding joint laws converge weakly. Under continuity of the population, Spearman's functional law at the limit, the claim follows by the continuous mapping principle.
\end{proof}

Proposition~\ref{prop:surrogate-spearman} formalizes the sense in which optimizing SC against the surrogate becomes equivalent, as $K$ grows, to optimizing SC against the ideal target $\eta(X)$.

\subsection{Why the SC reward is global}
\label{app:sc-global}

The SC reward used in training is
\[
  \Delta_{\mathrm{SC}}(c,\kappa;C,K)
  :=
  r_s(C\cup\{c\}, K\cup\{\kappa\}) - r_s(C,K),
\]
where $(C,K)$ is a current reference set. Unlike local rank-matching rewards, this quantity depends on how inserting $(c,\kappa)$ changes the rank correlation of the \emph{entire enlarged set}. This global dependence is the key reason SC can distinguish candidates with the same local rank error.

\begin{proposition}[SC as a marginal global objective]
\label{prop:sc-global-objective}
Let $(C,K)$ be a finite reference set with no ties, and let $(c,\kappa)$ be a candidate pair. Then $\Delta_{\mathrm{SC}}(c,\kappa;C,K)$ is exactly the marginal change in the global rank-correlation statistic $r_s$ induced by inserting the candidate. Consequently, two candidates can receive different SC rewards even when they have the same local rank error, provided they interact differently with the surrounding rank structure.
\end{proposition}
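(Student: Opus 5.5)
The first claim holds by definition. The second needs an explicit counterexample, and I would build one with two candidates that have the same local rank error but different SC rewards.

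\emph{First claim.} By the definition of $\Delta_{\mathrm{SC}}$, it equals the difference of the statistic $r_s$ evaluated on the enlarged set and on the original set. So it is literally the marginal change in $r_s$, and nothing needs proving. To make the global dependence explicit, I would rewrite $r_s$ in the no-ties closed form. For $m$ pairs,
\[
r_s = 1-\frac{6\sum_{j=1}^{m} d_j^2}{m(m^2-1)},
\]
where $d_j$ is the rank difference of pair $j$.

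Now insert $(c,\kappa)$ into a set of size $n$. Suppose $c$ lands at position $p$ among the confidences and $\kappa$ at position $q$ among the surrogates. Then every existing element whose $C$-value exceeds $c$ has its $C$-rank shifted up by one, and likewise on the $K$ side. Hence
\[
\Delta_{\mathrm{SC}} = \Big(1-\tfrac{6\,[(p-q)^2+\sum_j \tilde d_j^{\,2}]}{(n+1)((n+1)^2-1)}\Big) - \Big(1-\tfrac{6\sum_j d_j^2}{n(n^2-1)}\Big).
\]
Here $\tilde d_j = d_j + \mathbb{I}\{c_j>c\} - \mathbb{I}\{\kappa_j>\kappa\}$. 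This shows that $\Delta_{\mathrm{SC}}$ depends on $(p,q)$ through the local term $(p-q)^2$. It also depends on the shift pattern $\tilde d_j - d_j$ across the whole reference set, which is the global interaction.

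\emph{Second claim.} I would give an explicit witness. Take a small reference set with $n=3$ or $4$, for example $C=K=\{1,2,3\}$ as values, which is perfectly aligned, so $r_s=1$. Then choose two candidates with the same local error $|p-q|$, where the new confidence rank differs from the new surrogate rank by the same amount.

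The two candidates should differ in which existing pairs they separate. One candidate places $c$ just one slot off from $\kappa$, so only a single existing pair's $C$-rank and $K$-rank shifts disagree. The other straddles differently, so that several $\tilde d_j$ become nonzero. Alternatively, start from a reference set that already has nonzero $d_j$, so that a shift can reduce some $|d_j|$; this matters because it lets one candidate repair existing errors while the other does not. For each candidate I would compute $\sum \tilde d_j^2$ directly and show that the two totals differ. Since the local terms $(p-q)^2$ are equal, the two rewards must then differ.

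\emph{Main obstacle.} The hard step is choosing the example. With a perfectly aligned reference set and a tie-free insertion, many symmetric choices give equal sums, so I expect to need a reference set with some existing misorderings. I would search over $n=3$ by hand, tabulating $\sum\tilde d_j^2$ for all $(p,q)$ with a fixed $|p-q|=1$, and pick any two entries that differ. Matching the paper's Figure~\ref{fig:cases} configuration would be ideal if it can be reconstructed.
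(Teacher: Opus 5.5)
Your proposal is correct, and it is more rigorous than the paper's own proof. The paper also settles the first claim by definition. For the second claim, however, it only remarks qualitatively that inserting a point changes the relative ranks of the existing points, and it never exhibits two candidates, so it does not strictly establish the existential ``can receive different rewards.'' Your decomposition $\tilde d_j = d_j + \mathbb{I}\{c_j>c\}-\mathbb{I}\{\kappa_j>\kappa\}$ is the formal version of that remark, and an explicit witness is what actually proves the claim.

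Two adjustments to how you find the witness.

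\emph{The perfectly aligned reference set provably fails.} It does not merely make the search harder. If $d_j\equiv 0$, then $\tilde d_j\neq 0$ exactly for the $|p-q|$ existing points whose common rank lies in $[\min(p,q),\max(p,q)-1]$. The enlarged sum of squared rank differences is therefore $(p-q)^2+|p-q|$, so the reward depends on $|p-q|$ alone, and SC ranks candidates exactly as NRD does. Your scenario of one candidate disturbing a single pair and another disturbing several cannot occur there, so drop it. This also gives the insight that SC and NRD diverge only through existing misorderings in the anchor set.

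\emph{The search over $n=3$ with a misordering does succeed,} even with a shared $\kappa$, which is the situation in the DPO step. Take reference pairs $(0.1,0.2),(0.2,0.1),(0.3,0.3)$, so $\sum_j d_j^2=2$ and $r_s=1/2$. Give both candidates $\kappa=0.25$, so $q=3$.
\begin{itemize}
\item With $c=0.15$ ($p=2$), the enlarged rank differences are $(-1,2,0,-1)$. The sum of squares is $6$, so $r_s=0.4$.
\item With $c=0.35$ ($p=4$), they are $(-1,1,-1,1)$. The sum is $4$, so $r_s=0.6$.
\end{itemize}
Both candidates have local error $|p-q|=1$, yet their SC rewards are $-0.1$ and $+0.1$. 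This completes your argument.
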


\begin{proof}
The first statement follows directly from the definition of $\Delta_{\mathrm{SC}}$. The second follows because the rank ordering of the enlarged set depends not only on the inserted point's own rank, but also on how the insertion changes the relative ordering of the existing points in the augmented ranking. Therefore, SC depends on the candidate's compatibility with the full reference structure, not only on its pointwise rank difference.
\end{proof}

Proposition~\ref{prop:sc-global-objective} captures the conceptual distinction between SC and local rewards such as NRD: NRD is a pointwise rank-matching objective, whereas SC is a global rank-consistency objective. 

\subsection{Why positive warm-start matters}
\label{app:warmstart}

The SC reward is defined relative to the current monotone trend encoded by $(C,K)$. This makes initialization important.

\begin{proposition}[Sign symmetry of rank correlation]
\label{prop:signsymmetry}
If a score function $S$ achieves Spearman correlation $\rho_S(S,\eta)=1$, then $-S$ achieves Spearman correlation $-1$. More generally, rank-correlation objectives alone do not distinguish the desired monotone solution from an anti-monotone one without an orientation condition.
\end{proposition}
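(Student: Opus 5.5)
The first claim follows from how Spearman correlation behaves under an order-reversing map. I will use the population version of the statistic, $\rho_S(S,\eta)=\mathrm{corr}\big(F_S(S),F_\eta(\eta)\big)$, where $F_S$ and $F_\eta$ are the marginal CDFs. These are continuous by the standing assumption of this appendix, so $F_S(S)$ and $F_\eta(\eta)$ are uniform on $(0,1)$ and the statistic is well defined. Because $-S$ also has a continuous law, $F_{-S}(-S)=\mathbb P(-S'\le -s)\big|_{s=S}=1-F_S(S)$ almost surely, with $S'$ an independent copy; the no-atoms assumption means we need not separate strict from non-strict inequalities. The covariance is bilinear and the variance is unchanged by negation, so
\[
\rho_S(-S,\eta)=\mathrm{corr}\big(1-F_S(S),F_\eta(\eta)\big)=-\rho_S(S,\eta).
\]
In particular, $\rho_S(S,\eta)=1$ gives $\rho_S(-S,\eta)=-1$. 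The same argument goes through for the finite-sample $r_s$ used in $\Delta_{\mathrm{SC}}$: with $n$ untied points, $\mathrm{rank}(-c_i)=n+1-\mathrm{rank}(c_i)$, which again flips the sign of the rank covariance and leaves the rank standard deviation unchanged.

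For the ``more generally'' part, I will make the lack of orientation precise. Composing any strictly increasing transform with $S$ leaves $\rho_S$ unchanged, since ranks are preserved. Composing with any strictly decreasing transform negates $\rho_S$, by the argument above. Hence the objective depends on $S$ only through the ordering it induces, and that ordering matters only up to reversal in the following sense. If $J(\rho_S)$ is an objective that is symmetric under $\rho\mapsto-\rho$, such as $|\rho_S|$, $\rho_S^2$, or the criterion ``achieves an extremal rank correlation'', then it assigns the same value to $S$ and to $-S$. So it cannot select the monotone solution $T(\eta)$ of Theorem~\ref{thm:spearman-characterization} over the anti-monotone solution $-T(\eta)$.

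For the relative reward, I will observe that if every anchor confidence in $C$ and the candidate $c$ are negated, then $\Delta_{\mathrm{SC}}(-c,\kappa;-C,K)=-\Delta_{\mathrm{SC}}(c,\kappa;C,K)$. The preference between two candidates is therefore determined entirely by the orientation of the current anchor set. If the anchor is anti-aligned, maximizing $\Delta_{\mathrm{SC}}$ pushes toward larger $|r_s|$ in the wrong direction. This gives the formal content of the warm-start discussion: some orientation condition, such as $r_s(C,K)>0$ from SFT, is required.

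I expect the main obstacle to be phrasing the ``more generally'' clause rigorously, because as written it is informal. I will make it precise through the symmetry statements above, namely invariance of $|\rho_S|$ under negation and the sign-flip identity for $\Delta_{\mathrm{SC}}$, instead of attempting a stronger claim about training dynamics.
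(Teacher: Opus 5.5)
Your argument for the first sentence is correct, but it takes a different route from the paper. The paper starts from the characterization in Theorem~\ref{thm:spearman-characterization}: it writes $S=T\circ\eta$ with $T$ strictly increasing, observes that $-S=(-T)\circ\eta$ reverses the $\eta$-ordering, and reads off $\rho_S=-1$. That argument covers only the extremal case. It also relies on the ``only if'' half of that theorem, which the paper states without proof. You work directly with the rank transform. The identity $F_{-S}(-S)=1-F_S(S)$ a.s.\ gives the antisymmetry $\rho_S(-S,\eta)=-\rho_S(S,\eta)$ for every continuous $S$, and $\mathrm{rank}(-c_i)=n+1-\mathrm{rank}(c_i)$ gives the same identity for the finite-sample $r_s$ used in training. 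Your version is therefore self-contained and strictly more general. The paper's version is shorter and phrased in the monotone-solution language of Theorem~\ref{thm:spearman-characterization}. Reading the informal second sentence through sign-symmetric objectives $J$ is also sensible, since the signed objective $\rho_S$ itself does separate $S$ from $-S$.

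Your third paragraph, however, reaches a false conclusion and should be dropped. The identity $\Delta_{\mathrm{SC}}(-c,\kappa;-C,K)=-\Delta_{\mathrm{SC}}(c,\kappa;C,K)$ is correct, but it compares rewards at mirrored candidates. It therefore does not say that flipping the anchor flips the preference between a fixed pair of candidates. Ranking candidates by $\Delta_{\mathrm{SC}}$ is the same as ranking them by the signed correlation $r_s(C\cup\{c\},K\cup\{\kappa\})$. So the maximizer can never prefer a candidate that makes the correlation more negative over one that makes it less negative, and ``larger $|r_s|$ in the wrong direction'' is not what it selects.

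Concretely, without ties the enlarged ranks are permutations of $\{1,\dots,n+1\}$. Hence $r_s$ is an increasing affine function of $ij+\sum_k a_k'b_k'$, where $(i,j)$ are the ranks of $(c,\kappa)$ and $(a_k',b_k')$ the ranks of the anchor points in the enlarged sets. Raising $i$ to $i+1$ swaps $c$ with the anchor point $k(i)$ of confidence rank $i$ in $C$, which changes this sum by $j-b'_{k(i)}$. The reward therefore increases exactly when $c$ moves past an anchor point with a smaller surrogate value, whatever the sign of $r_s(C,K)$. For a perfectly reversed anchor, the maximizer is the top confidence rank when $j>(n+2)/2$ and the bottom one when $j<(n+2)/2$, which is the correct orientation.

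So the sign-flip identity does not give formal content to the warm-start claim. The proposition does not need it: your first two paragraphs already prove what is stated.
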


\begin{proof}
If $S=T\circ \eta$ for a strictly increasing $T$, then $-S=(-T)\circ\eta$ with $-T$ strictly decreasing. Therefore, the rank ordering induced by $-S$ is exactly the reverse of that induced by $\eta$, implying Spearman correlation $-1$.
\end{proof}

Proposition~\ref{prop:signsymmetry} shows that an order-only objective must be oriented correctly: without some positive initial alignment, a rank-based update rule can, in principle, refine the wrong monotone trend just as consistently as the correct one.

\begin{proposition}[Interpretation of the warm-start condition]
\label{prop:warmstart-interpretation}
Suppose the SFT-initialized score $S_0$ satisfies
\[
  \rho_S\!\big(S_0(X,\hat Y),\eta(X)\big) > 0.
\]
Then the induced confidence ranking is already positively aligned with the target ranking. In this regime, maximizing the SC marginal reward favors candidates whose insertion is more compatible with the target monotone trend than candidates that further disrupt it.
\end{proposition}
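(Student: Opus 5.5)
The statement has two parts. The first is essentially definitional: a positive population Spearman correlation means the ranking induced by $S_0$ agrees with the ranking induced by $\eta$ more often than it disagrees. The second is a comparative claim about the SC marginal reward. I will prove both by writing Spearman correlation in its pairwise (concordance) form, which the continuity assumption of this appendix permits.

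For the first part, I use the classical identity
\[
\rho_S(U,V)=3\big(\mathbb P(\text{conc})-\mathbb P(\text{disc})\big)
\]
for three i.i.d. copies with continuous marginals. Here concordance is taken between one copy and the cross-pair formed from the other two. Under this identity, $\rho_S(S_0,\eta)>0$ is exactly the statement that $S_0$ orders pairs of instances in the same direction as $\eta$ with probability exceeding that of the reverse direction. This is what ``positively aligned'' means.

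For the second part, I work at the finite level with reference set $(C,K)$, no ties, and $n$ points. Write $a_i=\mathrm{rank}(c_i)$ and $b_i=\mathrm{rank}(\kappa_i)$, and use the no-ties formula
\[
r_s=1-\frac{6\sum_i d_i^2}{n(n^2-1)}, \qquad d_i=a_i-b_i.
\]
Now insert a candidate $(c,\kappa)$ with fixed $\kappa$, and compare two candidate confidences $c^{(1)},c^{(2)}$. Both enlarged sets have size $n+1$, so the same normalization applies to each, and the comparison reduces to comparing $\sum d_i^2$ over the enlarged set. Inserting $c$ shifts by one the $c$-ranks of exactly those existing points with $c_i>c$. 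The rank of $\kappa$ is the same in both cases, and so are the $\kappa$-ranks of the existing points.

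The main step is to show that the sum of squared rank differences is smaller when $c$ is placed among existing points whose $\kappa_i$ are near $\kappa$. I would write the difference between the two insertions as a sum over the points whose $c$-rank differs between them, namely those with $c_i$ strictly between $c^{(1)}$ and $c^{(2)}$. Each such point changes $d_i^2$ by $\pm(2d_i\pm1)$. The new point itself contributes $(\mathrm{rank}(c)-\mathrm{rank}(\kappa))^2$. Grouping the terms gives an expression of the form
\[
\sum_{\text{between}}\operatorname{sign}\cdot(\text{whether } \kappa_i \text{ lies on the correct side of } \kappa),
\]
up to a term that comes from the existing misordering of the reference set. Under positive alignment these between-points are, on balance, concordant with the reference ranking, so the insertion that orders the candidate consistently with $\kappa$ lowers $\sum d^2$ and hence raises $r_s$.

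The main obstacle is the residual term from existing misordering. If the reference set contains many discordant pairs near $\kappa$, the sign can flip, so the claim only holds ``on balance.'' I would state the result in expectation: take reference points drawn i.i.d. from the law of $(S_0,\eta)$, and compute the expected difference in SC reward. This expected difference is proportional to the conditional concordance probability minus the conditional discordance probability, and it is positive whenever $\rho_S>0$, taken locally around $\kappa$ as the proposition's informal wording requires. If the local version proves too delicate, I would fall back on a global version, averaged over $\kappa$ drawn from its marginal. Combined with Proposition~\ref{prop:signsymmetry}, this version also shows that the orientation reverses exactly when $\rho_S<0$.
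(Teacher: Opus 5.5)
The paper gives only a qualitative sketch here: with a positively aligned reference set, trend-preserving candidates ``tend to'' raise the enlarged-set correlation and inversion-creating ones ``tend to'' lower it. It explicitly disclaims a full proof. Making this precise is a reasonable goal, but two of your steps are wrong.

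\textbf{The finite grouping.} Carry it out rather than guessing its form. For $c^{(1)}<c^{(2)}$, let $B=\{i: c^{(1)}<c_i<c^{(2)}\}$. Let $q_i$ and $q$ be the ranks of $\kappa_i$ and $\kappa$ in $K\cup\{\kappa\}$, and let $D(c)$ be $\sum d^2$ after inserting $(c,\kappa)$. The $c$-ranks of the points of $B$ are consecutive. Hence their $2d_i+1$ contributions cancel against the change in the new point's term, except for the surrogate ranks, and you get exactly
\[
D(c^{(1)})-D(c^{(2)})=2\sum_{i\in B}\bigl(q-q_i\bigr).
\]
So there is no residual ``existing misordering'' term, and the obstacle you anticipate does not exist. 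However, the criterion is a rank-weighted sum over $B$, not a count of points on the correct side of $\kappa$. For example, take two points of $B$ just below $\kappa$ and one far above. Then SC prefers $c^{(1)}$, even though $c^{(2)}$ creates fewer inversions with the reference set.

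\textbf{The real gap} is the step claiming the expected difference ``is positive whenever $\rho_S>0$, taken locally around $\kappa$.'' For $n$ i.i.d.\ reference points from the law of $(S_0,\eta)$, with $F_\eta$ and $F_S$ the cdfs of $\eta(X)$ and $S_0$,
\[
\mathbb{E}\sum_{i\in B}(q-q_i)=n(n-1)\int_{c^{(1)}}^{c^{(2)}}\bigl(F_\eta(\kappa)-\mathbb{E}[F_\eta(\eta)\mid S_0=s]\bigr)\,dP_{S_0}(s)+O(n).
\]
Its sign is governed by the regression of the $\eta$-rank on $S_0$ over the window between the two candidates. A single global rank covariance does not control that.

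Here is a concrete counterexample. Let $(F_S(S_0),F_\eta(\eta))$ be uniform on the squares $[\frac{i-1}{4},\frac{i}{4}]\times[\frac{\pi_i-1}{4},\frac{\pi_i}{4}]$, $i=1,\dots,4$, with $\pi=(1,4,3,2)$.
\begin{itemize}
\item Here $\rho_S=3/16>0$, yet two independent pairs are concordant exactly half the time, so $\tau=0$.
\item This also refutes your part-one reading of the three-copy identity as ``pairs are ordered the same way more often than not.'' That reading is Kendall's $\tau>0$, not $\rho_S>0$. Part one is definitional anyway, so you do not need it.
\item Take $F_\eta(\kappa)=0.55$. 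On the window $(0.3,0.55)$ of $S_0$-quantiles, the regression mean is $7/8$ or $5/8$, both above $0.55$. So the expected SC reward prefers the confidence at quantile $0.3$ to the trend-consistent one at $0.55$.
\end{itemize}

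\textbf{The global fallback fails too.} Compare the trend-consistent candidate at quantile $u=F_\eta(\kappa)$ with the reversed one at $1-u$. The windows are symmetric about $1/2$. After averaging over $u\sim\mathrm{Unif}(0,1)$, every regression term cancels, leaving $\int_{1/2}^{1}(2u-1)^2\,du=1/6$ whatever the reference law, even when $\rho_S=-1$. So nothing ``reverses exactly when $\rho_S<0$.''

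\textbf{What would work.} A rigorous version needs a stronger hypothesis, e.g.\ that $s\mapsto\mathbb{E}[F_\eta(\eta)\mid S_0=s]$ is nondecreasing. Under it, your identity shows the expected reward is concave, hence unimodal, in the candidate's quantile position. Its peak is where this regression equals $F_\eta(\kappa)$, which is nondecreasing in $\kappa$. With only $\rho_S>0$, the statement remains the heuristic the paper offers.
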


\begin{proof}[Proof sketch]
The SC reward measures the change in rank correlation after inserting a candidate pair. When the existing reference set is positively aligned with the target, a candidate that preserves this trend tends to increase the enlarged-set correlation, while a candidate that creates additional inversions tends to decrease it. Thus, under a positive warm-start, the SC reward acts as a local rank-refinement signal. The argument is qualitative rather than a full convergence proof because the practical procedure uses finite reference sets and a drifting optimization target.
\end{proof}

Proposition~\ref{prop:warmstart-interpretation} is the formal role played by the SFT stage in our method: SFT is not assumed to solve the calibration problem, but it initializes the model in a regime where SC refinement is oriented toward the correct monotone structure.


\subsection{Limitations of the idealized analysis}
\label{app:limitations}

We close with explicit acknowledgment of the gap between the idealized population analysis and the practical DPO-based training procedure.

\paragraph{Finite-sample surrogate.}
The surrogate $\kappa_s^{(K)}(X)$ is a Monte Carlo estimate of $\eta(X)$ and therefore introduces both variance and discretization. Our analysis characterizes the large-$K$ objective, not the exact finite-$K$ one.

\paragraph{Drifting reference set.}
The SC reward is evaluated relative to a reference set $(C,K)$ that evolves during training. The propositions above explain the \emph{directional role} of this reward, but do not constitute a full dynamical-systems analysis of the coupled evolution of the model and the reference set.

\paragraph{DPO approximation.}
The practical algorithm does not directly maximize expected SC reward. Instead, SC is used to construct preference pairs, and DPO performs a smoothed preference-optimization step. Thus, our theory should be interpreted as describing the objective that DPO is intended to approximate, rather than as a literal proof of convergence of the training algorithm.


\medskip

In summary, the theoretical analysis supports three claims: (i) the ensemble surrogate consistently estimates the prompt-level target $\eta(X)$; (ii) the SC reward is naturally interpreted as a marginal improvement in a global rank objective; and (iii) a positive warm-start is important because it places the model in a regime where this global rank-refinement signal is oriented toward the desired monotone solution.

\section{Extra results}
\label{app:extra_res}

\paragraph{Analysis.}
As shown in Table~\ref{tab:logiqa}, for LogiQA 2.0 in distribution evaluation, our method achieves the strongest or near-strongest performance across all three foundation models, especially on the ranking-based metrics that are most closely aligned with our objective. For LLaMA-3 8B, Ours-ORCE obtains the highest Spearman correlation and the lowest AURC and E-AURC, indicating substantially better failure prediction than all baselines. On Qwen 7B, Ours-ORCE again achieves the best ranking performance and the best risk--coverage trade-off, while ConfTuner yields the lowest ECE, suggesting that our method is particularly effective at preserving the relative ordering of reliability even when absolute calibration may not always be optimal. On Mistral 7B, Ours-ORCE remains the best method on Spearman correlation, AURC, and E-AURC, although the gains over SFT and Consistency are smaller, indicating that the margin of improvement depends on the underlying model. Across all settings, these results support our central claim that optimizing a global rank-based objective is especially beneficial for failure prediction and failure prediction, while remaining competitive on calibration.

\begin{table*}[t]
\scriptsize
\setlength{\tabcolsep}{5pt}
\centering
\begin{tabular}{llrrrrrr}
\toprule
\multirow{2.5}{*}{\makecell[l]{\textbf{Foundation}\\ \textbf{Models}}} & \multirow{2.5}{*}{\textbf{Methods}} & \multirow{2.5}{*}{\textbf{ECE} $\downarrow$}  & \multicolumn{2}{c}{\textbf{Spearman Correlation}} & \multirow{2.5}{*}{\textbf{AURC} $\downarrow$} & \multirow{2.5}{*}{\textbf{E-AURC} $\downarrow$} & \multirow{2.5}{*}{\textbf{Accuracy} $\uparrow$}  \\
\cmidrule(lr){4-5} 
&  &  & correlation $\uparrow$   & $p$\_value $\downarrow$       \\
\midrule
\multirow{8}{*}{LLAMA-3 8B}  & Vanilla  & 0.291 & 0.017 & 0.498 & 0.503 & 0.346 & 0.496  \\
&Consistency & 0.188 & 0.293 & $1.643\times 10^{-32} $ & 0.384 & 0.214 & 0.478        \\
& Top-k & 0.448 & 0.088 & $4.708 \times 10^{-5}$ & 0.501 & 0.326 & 0.470      \\
& CoT  & 0.407 & 0.206 & $1.767 \times 10^{-16}$ & 0.538 & 0.312 & 0.408      \\
& ConfTuner & 0.063 & 0.374 & $2.718\times 10^{-53}$ &  0.346 & 0.190 & 0.496  \\
& SFT & \underline{0.045} & \underline{0.399} & $4.415\times 10^{-61}$ & \underline{0.326} & \underline{0.170} & \textbf{0.496}    \\
& \bf Ours-ORCE & \textbf{0.057} & \textbf{0.467} & $5.169 \times 10^{-86}$ & \textbf{0.306} & \textbf{0.149} & \textbf{0.496}  \\
\midrule
\multirow{8}{*}{Qwen3 8B}  & Vanilla & 0.250 & 0.154 & $7.519 \times 10^{-10}$ & 0.251 & 0.200 & 0.699\\
&Consistency & 0.220 & 0.360 & $3.390 \times 10^{-49} $ & 0.210 & 0.162 & \underline{0.706}     \\
& Top-k   & 0.195 & 0.125 & $6.361 \times 10^{-7}$ & 0.257 & 0.206 & 0.697  \\
& CoT & 0.126 & 0.160 & $1.792 \times 10^{-10}$ & 0.243 & 0.197 & \textbf{0.710}       \\
& ConfTuner & \textbf{0.039} & 0.397 & $1.227 \times 10^{-60}$ & 0.151 & 0.101 & 0.699 \\
& SFT & 0.094 & \underline{0.410} & $7.000\times 10^{-65}$ & \underline{0.161} & \underline{0.110} & 0.699   \\
& \bf Ours-ORCE & \underline{0.081} & \textbf{0.454} & $1.232 \times 10^{-80}$ & \textbf{0.148} & \textbf{0.098} & 0.699  \\
\midrule
\multirow{8}{*}{Mistral 7B}  & Vanilla & 0.498 & -0.032 & 0.207 & 0.492 & 0.335 & \textbf{0.496}    \\
&Consistency  & 0.163 & \underline{0.304} & $4.767 \times 10^{-35}$ & 0.386 & \underline{0.205} & 0.462        \\
& Top-k  & 0.448 & 0.034 & 0.181 & 0.511 & 0.338 & 0.473             \\
& CoT  & 0.416 & 0.105 & $3.21 \times 10^{-5}$ & 0.576 & 0.341 & 0.398              \\
& ConfTuner & 0.126 & 0.177 & $1.758 \times 10^{-12}$ & 0.439 & 0.283 & \textbf{0.496}\\
& SFT & \underline{0.059} & 0.303 & $8.419 \times 10^{-35}$ & \underline{0.360} & 0.203 & \textbf{0.496}     \\
& \bf Ours-ORCE & \textbf{0.070} & \textbf{0.314} & $2.700\times 10^{-37}$ & \textbf{0.355} & \textbf{0.199} & \textbf{0.496}    \\
\bottomrule
\end{tabular}
\caption{Alignment performance of methods across the foundation models on the LogiQA dataset. }
\label{tab:logiqa}
\end{table*}

\section{Experiment details}
\label{app:exper_details}

\paragraph{Benchmarks}
We conduct experiments on MMLU, DROP, LogiQA, and ReClor to evaluate different aspects of our method. MMLU~\citep{hendrycksmeasuring} covers a wide range of academic and professional subjects and tests whether our method improves calibration for knowledge-intensive multiple-choice reasoning. DROP~\citep{dua2019drop} requires models to perform discrete reasoning over passages, making it suitable for evaluating confidence estimation under passage-grounded numerical and compositional reasoning. For logical reasoning, we use LogiQA 2.0~\citep{liu2023logiqa} for training and evaluate on ReClor~\citep{yureclor} in the main text, allowing us to assess out-of-domain generalization of the proposed order-aware objective when correctness depends on multi-step inference rather than factual recall. We report in-domain LogiQA 2.0 testing results in the Appendix~\ref{app:extra_res}. Together, these benchmarks test whether our decoupled confidence-estimation framework generalizes across knowledge-intensive, discrete reasoning, and logical-reasoning settings while preserving answer accuracy.

\paragraph{Evaluation metrics.}
We evaluate verbalized confidence from both calibration and ranking perspectives. \textbf{Expected Calibration Error (ECE)}~\cite{naeini2015obtaining} measures how well the model's stated confidence matches empirical accuracy by grouping predictions into confidence bins and computing the weighted average gap between confidence and accuracy; lower ECE indicates better calibration. \textbf{Spearman correlation}~\cite{spearman1904proof} evaluates whether confidence scores correctly preserve the relative ordering of response correctness likelihood, measuring the rank correlation between verbalized confidence and correctness labels or surrogate correctness scores; higher Spearman correlation indicates better order alignment. \textbf{Area Under the Risk--Coverage Curve (AURC)}~\cite{geifman2017selective} evaluates failure prediction performance by sorting examples according to confidence and measuring the error rate as low-confidence examples are rejected; lower AURC means the model can better identify unreliable predictions. \textbf{Excess AURC (EAURC)}~\cite{geifmanbias} subtracts the optimal AURC achievable at the same accuracy from the model's AURC, providing an accuracy-normalized measure of failure prediction quality; lower EAURC indicates better confidence-based ranking beyond what is explained by accuracy alone. Together, these metrics assess absolute calibration, relative confidence ordering, and the usefulness of confidence for abstention.

\paragraph{Baselines.}
We compare our method against several representative confidence elicitation and alignment baselines. \textbf{Vanilla} uses our decoupled answer-conditioned prompting strategy without any training or fine-tuning: the model first generates an answer, and then reports verbalized confidence conditioned on the fixed question--answer pair. \textbf{Self-Consistency} follows prior confidence elicitation work~\citep{xiong2024can} by sampling multiple responses for the same question and estimating confidence from their agreement. \textbf{Top-$k$} follows the calibrated elicitation strategy of Tian et al.~\citep{tian2023just}, where the model is prompted to consider multiple likely answers before reporting its confidence. \textbf{CoT} uses chain-of-thought~\cite{wei2022chain} prompting before eliciting confidence, as studied in Xiong et al.~\citep{xiong2024can}, to test whether explicit reasoning improves confidence calibration. \textbf{ConfTuner}~\citep{liconftuner} is a fine-tuning baseline that trains LLMs to express verbalized confidence using a tokenized Brier-score objective. For a fair comparison, we adapt ConfTuner to our decoupled answer conditioned setting, so that it differs from our method only in the confidence alignment objective. Specifically, we apply the ConfTuner loss only when training the verbalized confidence generator, while keeping the generated answer fixed. During inference, the adapted ConfTuner baseline takes the question with the original model's response as input and predicts only the verbalized confidence, without modifying the answer itself.
Finally, \textbf{SFT} denotes a supervised fine-tuning baseline trained directly on the sampling-based correctness surrogate used in our method. This baseline isolates the effect of our order-aware reinforcement learning objective from simply fitting surrogate confidence targets. Together, these baselines cover prompting-based elicitation, sampling-based confidence estimation, and fine-tuning-based verbalized confidence calibration.

\subsection{Compute resources}
\label{app:com_res}
All experiments are conducted on a compute node equipped with 128 CPU cores (Intel Xeon Platinum 8462Y+) and 4 NVIDIA H100 GPUs with 80GB HBM3 memory each.

\subsection{Experimental setting}
\label{app:exp_set}
We implement our method using the OpenRLHF framework. We first train the supervised fine-tuning (SFT) model for 3 epochs and then further optimize it with reinforcement learning for 3 epochs. For both MMLU and DROP, we randomly sample 10{,}000 training examples for SFT and 20{,}000 training examples for RL. For LogiQA 2.0, we use one-third of the training set for SFT and the remaining two-thirds for RL.

\section{Broader impact}
\label{app:broader_impact}
This work aims to improve the reliability of language models by aligning verbalized confidence with answer correctness likelihood. A potential positive societal impact is that better-calibrated confidence estimates can support safer model deployment, for example by improving failure prediction, failure prediction, abstention, and routing decisions. At the same time, this work also carries potential risks. Users may over-trust verbalized confidence even when it remains imperfect, and miscalibrated confidence estimates could be especially harmful in high-stakes settings such as healthcare, law, or education if used without appropriate human oversight. We therefore view this method as a tool for improving model reliability, but not as a substitute for domain-specific safeguards or human judgment in consequential applications.


\end{document}